\documentclass[authoryear,12pt]{elsarticle}
\pdfminorversion=4
%%%%%%%%%%%
% Packages
%%%%%%%%%%%
\usepackage{amsmath,amssymb,amsthm}
\usepackage[ruled]{algorithm2e}
\usepackage{graphicx}
\usepackage{color}
\usepackage{xcolor}
\usepackage{booktabs}
\usepackage{caption}
\usepackage{subcaption}
\usepackage{soul}
\usepackage{xurl}
\graphicspath{{fig/}}

\usepackage{setspace}
\usepackage[T1]{fontenc}				% For T1 fonts
\usepackage[nomarkers]{endfloat}        % or /usepackage{float}
\usepackage{placeins}
\usepackage{ragged2e}

% %%%%%%%%%
% % Python code
% %%%%%%%%%
\usepackage{listings}
\usepackage{xcolor}
\definecolor{codegreen}{rgb}{0,0.6,0}
\definecolor{codegray}{rgb}{0.5,0.5,0.5}
\definecolor{codepurple}{rgb}{0.58,0,0.82}
\definecolor{backcolour}{rgb}{0.95,0.95,0.92}
\lstdefinestyle{mystyle}{
    backgroundcolor=\color{backcolour},   
    commentstyle=\color{codegreen},
    keywordstyle=\color{magenta},
    numberstyle=\tiny\color{codegray},
    stringstyle=\color{codepurple},
    basicstyle=\ttfamily\footnotesize,
    breakatwhitespace=false,         
    breaklines=true,                 
    captionpos=b,                    
    keepspaces=true,                 
    numbers=left,                    
    numbersep=5pt,                  
    showspaces=false,                
    showstringspaces=false,
    showtabs=false,                  
    tabsize=2
}
\lstset{style=mystyle}

%%%%%%%%%
% Layout
%%%%%%%%%

\topmargin -0.8in
\oddsidemargin -0.2in
\textwidth 7in
\textheight 9.2in
\footskip 0.75in
\allowdisplaybreaks

%%%%%%%%%%%%%%%%%%
% Custom commands
%%%%%%%%%%%%%%%%%%

\renewcommand{\v}[1]{\boldsymbol{\mathbf{#1}}}
\newcommand{\vdot}[1]{\dot{\v{#1}}}
\newcommand{\vhat}[1]{\hat{\v{#1}}}
\newcommand{\vtilde}[1]{\widetilde{\v{#1}}}
\newcommand{\vbar}[1]{\bar{\v{#1}}}
\newcommand{\T}{\top}
\newcommand{\vt}[1]{\v{#1}^\T}
\newcommand{\R}[1][]{\mathbb{R}^{#1}}
\newcommand{\bmat}[1]{\begin{bmatrix} #1 \end{bmatrix}}
\newcommand{\norm}[1]{\|#1\|}
\newcommand{\dv}[2]{\frac{d #1}{d #2}}

\newcommand{\qty}[1]{\left(#1\right)}
\newcommand{\bqty}[1]{\left[#1\right]}
\newcommand{\ri}[1]{{(#1)}}

\newtheorem{theorem}{Theorem}
\newtheorem{lemma}{Lemma}
\newtheorem{remark}{Remark}

\journal{Elsevier}
\bibliographystyle{model2-names}

%%%%%%%%%%%%%%%%%%
% Title & authors
%%%%%%%%%%%%%%%%%%
\begin{document}
\baselineskip 24pt
\begin{frontmatter}
\author[1]{Zihao Wang}
\author[1]{Donghan Yu}
\author[1]{Zhe Wu\corref{a}}
\cortext[a]{Corresponding author. E-mail: wuzhe@nus.edu.sg.\\Please refer to \url{https://github.com/killingbear999/ICLSTM} for source codes.}

\address[1]{Department of Chemical and Biomolecular Engineering, National University of Singapore, 117585, Singapore}

\title{Real-Time Machine-Learning-Based Optimization Using Input Convex Long Short-Term Memory Network}

%%%%%%%%%%%%%%%%%
\begin{abstract}
Neural network-based optimization and control methods, often referred to as black-box approaches, are increasingly gaining attention in energy and manufacturing systems, particularly in situations where first-principles models are either unavailable or inaccurate. However, their non-convex nature significantly slows down the optimization and control processes, limiting their application in real-time decision-making processes. To address this challenge, we propose a novel Input Convex Long Short-Term Memory (IC-LSTM) network to enhance the computational efficiency of neural network-based optimization. Through two case studies employing real-time neural network-based optimization for optimizing energy and chemical systems, we demonstrate the superior performance of IC-LSTM-based optimization in terms of runtime. Specifically, in a real-time optimization problem of a real-world solar photovoltaic energy system at LHT Holdings in Singapore, IC-LSTM-based optimization achieved at least 4-fold speedup compared to conventional LSTM-based optimization. These results highlight the potential of IC-LSTM networks to significantly enhance the efficiency of neural network-based optimization and control in practical applications.
\end{abstract}
	
\begin{keyword}
Optimization, Deep Learning, Input Convex Neural Networks, Computational Efficiency, Nonlinear Processes, Solar PV Systems
\end{keyword}
\end{frontmatter}

%%%%%%%%%%%%
% Main body
%%%%%%%%%%%%
\section{Introduction}
Model-based optimization and control have been widely applied in energy and chemical systems in decades \citep{cai2009optimization, eisenhower2012methodology, wang2015operational, stadler2016model, lim2020optimal}. Traditional model-based optimization and control rely on the development of first-principles models, a process that is resource-intensive (in real-world applications, deriving an accurate first-principles model of a complex system is often challenging). In the era of big data, data-driven machine-learning approaches have emerged as viable alternatives to first-principles models within model-based optimization formulations. This advancement facilitates the practical application of model-based optimization across various industries, significantly enhancing its commercial viability. Neural networks, in particular, have been used to develop process models for complex systems where first-principles models are unavailable. 

\begin{figure}[ht!]
\centering
\includegraphics[width=0.7\textwidth]{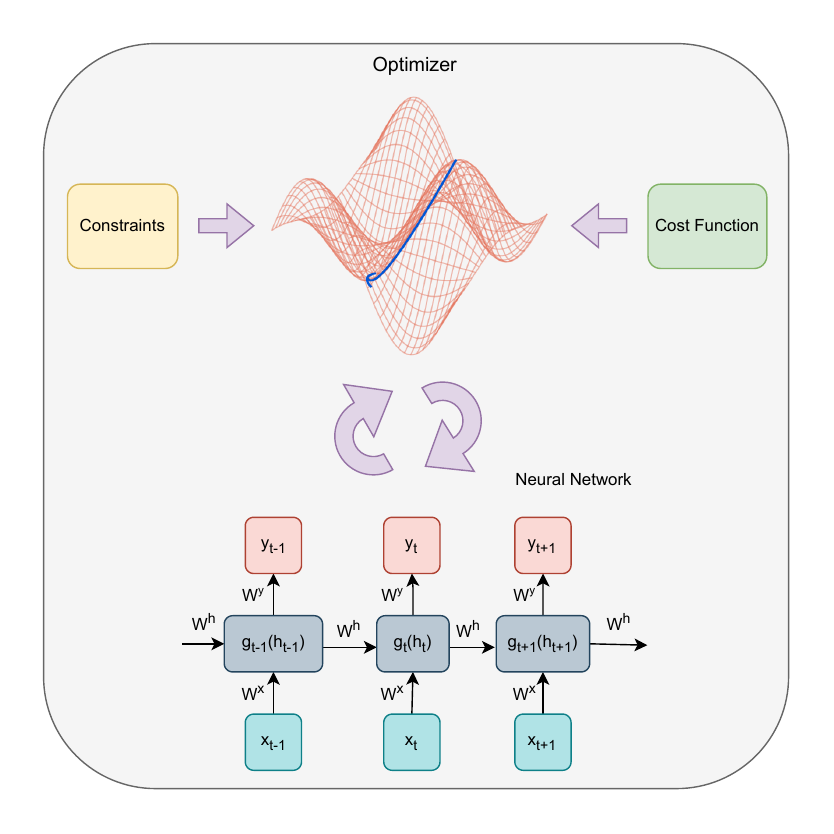}
\caption{System architecture of neural network-based optimization.}
\label{fig_optimizer}
\end{figure}

Neural network-based optimization (see Fig. \ref{fig_optimizer}) has found applications in various domains, such as approximation of the hybrid neuroprosthesis system \citep{bao2017recurrent}, regulation of Heating, Ventilation and Air-conditioning (HVAC) systems \citep{afram2017artificial,ellis2020encoder}, building energy optimization \citep{smarra2018data, yang2020model, bunning2022physics}, batch crystallization process \citep{zheng2022machine,zheng2022online}, and thin-film decomposition of quantum dot \citep{sitapure2022neural}. However, traditional neural network-based optimization and control encounter challenges in computational efficiency for online implementation. This is because using conventional neural networks to capture system dynamics within an optimization problem can introduce non-convexity. In our previous works \citep{wu2019machine1, wu2019machine2, pravin2022hyperparameter}, we noted that recurrent neural network (RNN)-based optimization for energy and chemical systems exhibited a significantly slower computational speed compared to optimization based on the first-principles model. 

While neural networks offer advantages in process modeling, ensuring computational efficiency (i.e., the runtime required to solve neural network-based optimization problems) is crucial for real-time optimization tasks, which can sometimes hinder their application in real-world systems. In hybrid energy systems, such as integrated solar photovoltaic (PV), battery, and grid systems, real-time or near-real-time control is essential to ensure efficient, reliable, and sustainable operation, and optimization techniques are often required. Additionally, in chemical industries, swift decision-making is pivotal for safety in chemical processes, as delays in addressing reactant changes can result in undesired reactions or unsafe conditions. Rapid decision-making extends its benefits to optimizing the utilization of raw materials, energy, and other resources in other industries as well, ultimately yielding cost savings and reducing the environmental footprint. In summary, for neural network-based optimization, runtime is a critical parameter to safeguard product quality, safety, efficiency, and resource utilization, with profound implications for both operational and environmental sustainability.

Inspired by the fact that the optima of convex optimization problems are easier and faster to obtain than those of non-convex optimization problems, our goal is to preserve the convexity in neural network-based optimization. This approach aims to ensure that the neural network output remains convex with respect to the input. Input Convex Neural Networks (ICNNs) were initially developed to ensure the achievement of globally optimal solutions by preserving system convexity, making them a powerful tool in the field of optimization and control. ICNNs have been applied to several neural network-based optimization problems, such as optimal transport mapping \citep{makkuva2020optimal}, voltage regulation \citep{chen2020data, chen2020input}, the Van de Vusse reactor \citep{yang2021optimization}, molecular discovery \citep{alvarez2021optimizing}, and DC optimal power flow \citep{zhang2021convex}. However, current versions of ICNNs (i.e., Input Convex Feedforward Neural Networks (ICFNN) \citep{amos2017input} and Input Convex Recurrent Neural Networks (ICRNN) \citep{chen2018optimal}) have not yet achieved the desired computational efficiency. For example, ICRNN performs comparably to conventional Long Short-Term Memory (LSTM) models in some optimization tasks due to LSTM's advanced gating architecture, which has been well documented in the literature \citep{shewalkar2019performance, sherstinsky2020fundamentals}.

Therefore, in this study, by combining the strengths of the LSTM architecture with the benefits of convex optimization, we propose a novel Input Convex LSTM (IC-LSTM) network to enhance the computational efficiency of neural network-based optimization. We validated the performance of the IC-LSTM-based optimization and control on a solar PV energy system at LHT Holdings in Singapore, and a chemical reactor example. The rest of this paper is organized as follows: Section \ref{sec1.5} introduces nonlinear systems and model-based optimization. Section \ref{sec2} provides a comprehensive overview of variants of RNNs and ICNNs, and proposes a novel IC-LSTM architecture, along with the underlying design principles. Section \ref{sec: iclstm} delves into the proof of preservation of convexity for IC-LSTM, provides an implementation guide for the IC-LSTM cell, and evaluates its modeling performance on surface fitting for non-convex bivariate scalar functions. Section \ref{sec3} proves the preservation of convexity in IC-LSTM-based optimization. Section \ref{sec_solarpv} and Section \ref{sec4} validate the performance and computational efficiency of our proposed framework against established baselines through case studies involving a solar PV energy system at LHT Holdings in Singapore and a continuous stirred tank reactor (CSTR), respectively.

\begin{remark}
In our previous works \citep{wu2019machine1, wu2019machine2}, we demonstrated that machine learning-based optimization can effectively replicate the results of first-principles model-based optimization, particularly in the context of using model predictive control (MPC) for CSTR. Since obtaining an accurate first-principles model or identifying the parameter values for such models is often challenging in real-world applications, neural networks provide an alternative solution to modeling complex nonlinear systems using a large, representative dataset. Additionally, when some physics are known, physics-informed machine learning could be applied to improve generalization performance (e.g., \citep{zheng2023physics}). In this work, we focus on black-box scenarios, and our goal is to accelerate neural network-based optimization during online implementation stages (e.g., real-time MPC optimization problems).
\end{remark}

%%%%%%%%%%%%%%%%%%%%
\section{Nonlinear Systems and Optimization}
\label{sec1.5}
\subsection{Notation}
In the following sections, we adopt the common notation style in the deep learning community and use boldfaced symbols to denote vectors or matrices. $g$ denotes the activation function. The class $\mathcal{C}^1 $ denotes continuously differentiable functions. Set subtraction is denoted by ``$\backslash$'', that is, $A \backslash B := \{x | \ x \in A, x \notin B\}$. A matrix $\v M\in\R[n\times n]$ is positive semi-definite if $\vt v \v M \v v \geq 0, \forall \v v\in\R[n]$, and is denoted as $\v M\succeq 0$. Element-wise multiplication (i.e., Hadamard product) is denoted by $\ast$. Moreover, $\v f$ denotes the forget gate, $\v i$ denotes the input gate, $\v o$ denotes the output gate, $\v c$ denotes the cell state, and $\v h$ denotes the hidden state in the LSTM network.

\subsection{Class of Systems}\label{subsec:classofsystems}
In this work, we consider a general class of continuous-time nonlinear dynamical systems represented by the following  first-order ordinary differential equations (ODEs):
\begin{equation}\label{eq:classofsystems}
\vdot x(t) = F(\v x(t),\v u(t))
\end{equation}
where $\v x \in \R[n_x] $ denotes the state vector, $\v u \in \R[n_u]$ is the manipulated input. $F : D \times U  \to \R[n_x]$ is a $\mathcal{C}^1 $ function, where $D \subset \R[n_x]$ and $U \subset \R[n_u]$ are compact and connected subsets that contain an open neighborhood of the origin, respectively. For complex real-world systems, it is often tedious and difficult to derive the exact ODEs. Thus, when data are abundant, we can apply the data-driven approach to learn a neural network model, which can be utilized in an optimization problem.

% and incorporate it into optimization problems while ensuring the computational efficiency necessary for solving neural network-based optimization problems in real-time.

\subsection{Neural Network-Based Optimization}
A machine learning-based MPC problem is formulated as follows:
% The dynamic optimization scheme (i.e., also termed model predictive control (MPC) in many control works) using a neural network model as the prediction model is given by the following optimization problem:
\begin{subequations}\label{eq9}
\begin{gather}
\mathcal{L} = \min_{\v u \in S(\Delta)} \int_{t_k}^{t_{k+N}}J(\vtilde{x}(t),\v u(t))dt \label{eq9a}\\
\text{s.t. }\dot{\vtilde{x}}(t) = F_{nn}(\vtilde{x}(t),\v u(t))\label{eq9b}\\
\v u(t) \in U, \ \forall t \in [t_k,t_{k+N})\label{eq9c}\\
\vtilde{x}(t_k) = \v x(t_k)\label{eq9d}
\end{gather}
\end{subequations}
where $\vtilde{x}$ is the predicted state trajectory, $S(\Delta)$ is the set of piecewise constant functions with sampling period $\Delta$, and $N$ is the number of sampling periods in the prediction horizon. The objective function $\mathcal{L}$ in Eq. \eqref{eq9a} incorporates a cost function $J$ in terms of the system states $\v x$ and the control actions $\v u$. The dynamic function $F_{nn}(\vtilde{x}(t),\v u(t))$ in Eq. \eqref{eq9b} is parameterized as RNNs (e.g., plain RNN, ICRNN, IC-LSTM, etc., which will be introduced in the next section). Eq. \eqref{eq9c} is the constraint function $U$ on feasible control actions. Eq. \eqref{eq9d} defines the initial condition $\vtilde{x}(t_k)$ of Eq. \eqref{eq9b}, which is the state measurement at $t = t_k$. The first element of the optimal input trajectory computed by Eq. \eqref{eq9} will be applied to the system over the next sampling period, and the optimization problem will be resolved again at the next sampling time with new state measurement.

However, due to the inherent non-convexity of neural networks, neural network-based optimization problems are generally non-convex. Non-convex optimization is a challenging and time-consuming task, often necessitating a trade-off between solution accuracy and computational feasibility. This complexity arises from the presence of multiple local optima and intricate landscapes that are difficult to navigate. This challenge motivates us to develop input convex neural networks, aiming to transform the resulting non-convex neural network-based optimization problem into a convex one. By achieving convexity, we can solve the optimization process in a more tractable and computationally efficient way. %, while maintaining a certain level of accuracy and effectiveness in the solutions.

%%%%%%%%%%%%%%%%%%%%
\section{Family of Recurrent Neural Networks and Input Convex Neural Networks}
\label{sec2}
In this section, we provide a general introduction to conventional RNNs and their variants. Then, we provide a brief recap of the existing ICNNs in the literature. %followed by an introduction to the proposed IC-LSTM networks.

\subsection{Recurrent Neural Networks}
RNNs are a class of artificial neural networks designed for processing sequences of data. Unlike traditional feedforward neural networks, RNNs have connections that form directed cycles, allowing them to maintain a memory of previous inputs. This capability makes RNNs particularly well suited for tasks involving time series data, natural language processing, speech recognition, and other applications where the context provided by previous inputs is crucial for accurate predictions. RNNs leverage their internal state to capture temporal dynamics, enabling them to model complex sequential relationships effectively. Currently, two primary variants of RNNs are widely used in engineering fields: the simple RNN and the LSTM network, which are both non-convex in nature.

\subsubsection{Simple RNN}
A simple RNN cell follows:
\begin{align*}\label{eq_rnn}
\v h_t & = g_1(\v W^{(x)}\v x_t + \v U^{(h)}\v h_{t-1} + \v b^{(h)}) \\
y_t & = g_2(\v W^{(y)}\v h_t + \v b^{(y)})
\end{align*}
where $\v h_t$ is the hidden state at time step $t$, $\v x_t$ is the input at time step $t$, $\v h_{t-1}$ is the hidden state from the previous time step, $\v W^{(x)}$, $\v U^{(h)}$ and $\v W^{(y)}$ are weight matrices for the input, hidden state, and output respectively, $\v b^{(h)}$ and $\v b^{(y)}$ are the bias vectors for the hidden state and output respectively, and $y_t$ is the output at time step $t$.

\subsubsection{LSTM}
A conventional LSTM cell follows \citep{hochreiter1997long}:
\begin{subequations}\label{eq_lstm}
\begin{align}
\v f_t &= g^\ri{f}[\v U^\ri f\v x_t + \v W^\ri f\v h_{t-1} + \v b^\ri f]\\
\v i_t &= g^\ri{i}[\v U^\ri i\v x_t + \v W^\ri i\v h_{t-1} + \v b^\ri i]\\
\v o_t &= g^\ri{o}[\v U^\ri o\v x_t + \v W^\ri o\v h_{t-1} + \v b^\ri o]\\
\vtilde c_t &= g^\ri{c}[\v U^\ri c\v x_t + \v W^\ri c\v h_{t-1} + \v b^\ri c]\\
\v c_t &= \v f_t\ast\v c_{t-1} + \v i_t\ast\vtilde c_t\\
\v h_t &= \v o_t\ast g^\ri{h}(\v c_t) \\
y_t & = g^\ri{y}(\v W^{(y)}\v h_t + \v b^{(y)})
\end{align}
\end{subequations}
where $\v x_t$ is the input at time step $t$, $y_t$ is the output at time step $t$, $\v W^{(f)}$, $\v W^{(i)}$, $\v W^{(o)}$, $\v W^{(c)}$, $\v U^{(f)}$, $\v U^{(i)}$, $\v U^{(o)}$, $\v U^{(c)}$ and $\v W^{(y)}$ are weight matrices for different gates and outputs, respectively, $\v b^{(f)}$, $\v b^{(i)}$, $\v b^{(o)}$, $\v b^{(c)}$ and $\v b^{(y)}$ are the bias vectors for different gates and outputs, respectively.

\subsection{Input Convex Neural Networks}
ICNNs represent a category of deep learning models where the output is designed to exhibit convexity with respect to the input. Currently, there exist two primary variants of input convex architectures: Input Convex Feedforward Neural Networks and Input Convex Recurrent Neural Networks.

\subsubsection{ICFNN}
Standard feedforward neural networks (FNN) are generally non-convex due to the presence of multiple layers with nonlinear activation functions. Thus, ICFNN was proposed by \cite{amos2017input} with the output of each layer as follows:
\begin{equation*}\label{eq_icfnn}
\v z_{l+1} = g_l(\v W_l^\ri{z}\v z_l + \v W_l^\ri{x}\v x + \v b_l),\quad l=0,1,\dots,L-1
\end{equation*}
and with $\v z_0,\v W_0^\ri{z}=\v 0$. The output $\v z_{l+1}$ is input convex for single-step prediction if all weights $\v W_l^{(z)}$ are non-negative and all activation functions $g_l$ are convex and non-decreasing \citep{amos2017input}, while the output $\v z_{l+1}$ is input convex for multi-step ahead predictions if all weights $\v W_l^{(z)}$ and $\v W_l^{(x)}$ are non-negative and all activation functions $g_l$ are convex and non-decreasing \citep{bunning2021input}.

\subsubsection{ICRNN}
Following the idea of ICFNN, \cite{chen2018optimal} developed ICRNN with the output following the equations below:
\begin{align*}\label{eq_icrnn}
\v h_t & = g_1(\v U\vhat x_t + \v W\v h_{t-1} + \v D_2\vhat x_{t-1}) \\
y_t & = g_2(\v V\v h_t + \v D_1\v h_{t-1} + \v D_3\vhat x_t)
\end{align*}
where the output $y_t$ is input convex if all weights $\{\v U, \v W, \v V, \v D_1, \v D_2, \v D_3\}$ are non-negative and all activation functions $g_i$ are convex and non-decreasing, where $\vhat x_t$ denotes the expanded input $\bmat{\vt x_t, -\vt x_t}^\T$.

%%%%%%%%%%%%%%%%%%%%
\section{Input Convex Long Short-Term Memory}
\label{sec: iclstm}
In this section, we first propose a novel input convex network in the context of LSTM, and then theoretically prove the convex property of the IC-LSTM. It is important to note that 
designing IC-LSTM is more complex than ICRNNs or ICNNs, as non-negative weight constraints and convex, non-decreasing activation functions do not guarantee model convexity, which will be elaborated in the following subsections. Subsequently, a brief coding implementation guide for the IC-LSTM cell in Python is provided, with several toy examples for surface fitting of non-convex bivariate scalar functions to demonstrate the modeling performance using the proposed IC-LSTM.

\subsection{Input Convex LSTM}
LSTM networks have been shown to provide many advantages over traditional RNNs and FNNs, particularly in handling sequential data and capturing long-term dependencies \citep{hochreiter1997long, shewalkar2019performance, sherstinsky2020fundamentals}.  By controlling the flow of information and retaining relevant context over time, LSTMs are well suited for various tasks involving temporal dynamics and sequential patterns. Therefore, inspired by the success of ICNNs and ICRNNs, in this work, we develop a novel input convex architecture based on LSTM, referred to as Input Convex LSTM, as shown in Fig. \ref{fig_IC-LSTM}. Specifically, the output of the IC-LSTM cell follows (see Fig. \ref{fig_IC-LSTM_cell}):
\begin{subequations}\label{eq_IC-LSTM_cell}
\begin{align}
\v f_t &= g^\ri{g}[\v D^\ri f(\v W^\ri x\vhat x_t + \v W^\ri h\v h_{t-1}) + \v b^\ri f]\\
\v i_t &= g^\ri{g}[\v D^\ri i(\v W^\ri x\vhat x_t + \v W^\ri h\v h_{t-1}) + \v b^\ri i]\\
\v o_t &= g^\ri{g}[\v D^\ri o(\v W^\ri x\vhat x_t + \v W^\ri h\v h_{t-1}) + \v b^\ri o]\\
\vtilde c_t &= g^\ri{c}[\v D^\ri c(\v W^\ri x\vhat x_t + \v W^\ri h\v h_{t-1}) + \v b^\ri c]\\
\v c_t &= \v f_t\ast\v c_{t-1} + \v i_t\ast\vtilde c_t\\
\v h_t &= \v o_t\ast g^\ri{c}(\v c_t)
\end{align}
\end{subequations}
where $\v D^\ri f, \v D^\ri i, \v D^\ri o, \v D^\ri c\in\R[n_h\times n_h]$ are diagonal matrices with non-negative entries. $\v W^\ri h\in\R[n_h\times n_h]$ and $\v W^\ri x\in\R[n_h\times n_i]$ are non-negative weights (i.e., sharing weights across all gates), and $\v b^\ri f, \v b^\ri i, \v b^\ri o, \v b^\ri c\in\R[n_h]$ are the bias. Similar to \cite{chen2018optimal}, we \textbf{expand the input} as $\vhat x_t = \bmat{\vt x_t, -\vt x_t}^\T\in\R[n_i]$, where $n_i=2n_x$.

Furthermore, the output of $L$-layer IC-LSTM follows (see Fig. \ref{fig_IC-LSTM_nlayer}):
\begin{align}\label{eq_IC-LSTM}
\v z_{t,l} &= g^{(d)}[\v W_l^\ri{d}\v h_{t, l} + \v b_l^\ri{d}] + \vhat x_t,\quad l=1,2,\dots,L\\
\v y_t &= g^{(y)}[\v W^\ri{y}\v z_{t,L} + \v b^\ri{y}]
\end{align}
where $\v W_l^\ri d\in\R[n_i\times n_h]$ and $\v W^\ri y\in\R[n_o\times n_i]$ are the non-negative weights; $\v b_l^\ri d\in\R[n_i]$ and $\v b^\ri y\in\R[n_o]$ are the bias. $g^{(d)}$ is any convex, non-negative, and non-decreasing activation function, and $g^{(y)}$ is convex and non-decreasing.

\begin{figure}[ht!]
    \centering
    \begin{subfigure}[t]{0.76\textwidth}
        \centering
        \includegraphics[width=\columnwidth]{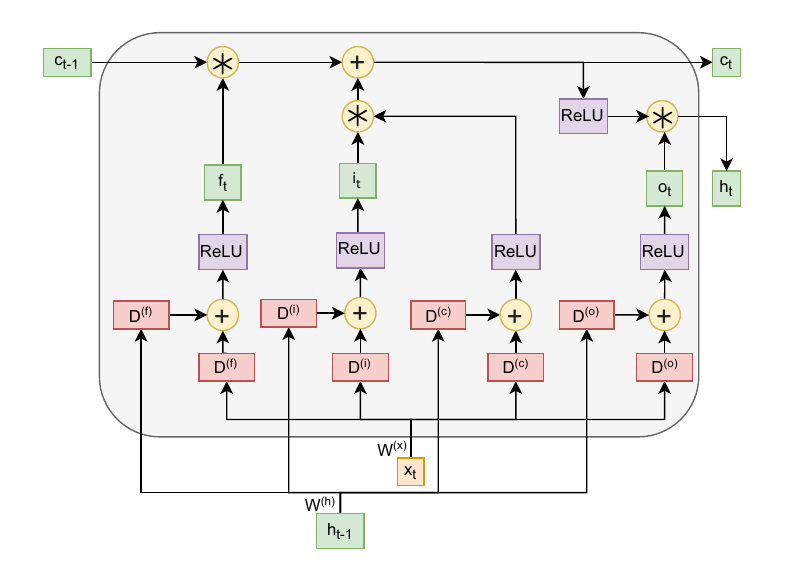}
        \caption{IC-LSTM cell}
        \label{fig_IC-LSTM_cell}
    \end{subfigure}
    ~ 
    \begin{subfigure}[t]{0.21\textwidth}
        \centering
        \includegraphics[width=\columnwidth]{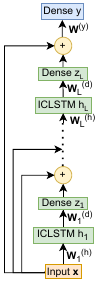}
        \caption{L-layer IC-LSTM}
        \label{fig_IC-LSTM_nlayer}
    \end{subfigure}
    \caption{Architecture of IC-LSTM.}
    \label{fig_IC-LSTM}
\end{figure}

As discussed in \cite{chen2018optimal}, expanding the input to $\vhat x$ facilitates network composition in dynamic system scenarios, and provides additional advantages. In our experiments, we discovered that incorporating a non-negative weight constraint in an ICNN restricts its representability. The expanded input allows for a more accurate representation of dynamic systems, as opposed to the original input. This input expansion can be regarded as a form of data augmentation, bolstering the model's robustness. Furthermore, including the negation of the input enhances gradient flow during training. By providing the network with both $\v x$ and $-\v x$, we introduce a larger number of symmetric data points, resulting in more consistent and well balanced gradients throughout the training process.

Additionally, it should be pointed out that the design of IC-LSTM is not as trivial as ICRNNs or ICNNs. Imposing non-negative constraints on LSTM weights and requiring activation functions to be convex and non-decreasing do not guarantee the convexity of LSTM models. Specifically,  unlike traditional LSTM models, the proposed IC-LSTM employs \textbf{shared weights} across all gates and introduces non-negative \textbf{trainable scaling vectors} to distinguish them. Moreover, the IC-LSTM enforces an additional non-negative constraint on the activation function compared to ICFNN and ICRNN. These modifications ensure the input convexity of the IC-LSTM cell. Furthermore, ICFNN and ICRNN leverage weighted direct ``passthrough'' layers to improve representational capabilities, while IC-LSTM adopts a parameter-free skip connection to enhance generalization \citep{he2016deep}. In particular, a dense layer with the same dimension as the input is followed by every LSTM layer to maintain consistent dimensions between the input and output of the LSTM layer. This configuration facilitates the subsequent concatenation of the layer output with the input via the skip connection. It should be noted that the use of parameter-free skip connections and dense layers reduces network complexity. This simplification is particularly beneficial as it compensates for the internal complexity inherent to the LSTM layer. In the following section, we will prove the convexity of the proposed IC-LSTM network.

\begin{remark}   
Due to the use of the ReLU activation function in ICNNs, weight initialization is crucial for effective learning and generalization. Poor initialization can result in suboptimal modeling performance and potentially lead to exploding gradients. A simple but effective practice is to initialize the weights small and close to zero (e.g., using a random normal distribution with a mean of 0 and a standard deviation of 0.01, or a random uniform distribution with a minimum of 0 and a maximum of 1). For a more comprehensive study, interested readers can refer to \cite{hoedt2024principled} on weight initialization for ICNNs.
\end{remark}

\subsection{Convexity of IC-LSTM}
In this subsection, we prove the convexity of IC-LSTM. The following lemma is first provided and will be used in the proof of Theorem \ref{theorem1} on the convexity of an $L$-layer IC-LSTM.
\begin{lemma}\label{lemma: IC-LSTM cell}
The proposed IC-LSTM cell depicted in Fig. \ref{fig_IC-LSTM_cell} is convex and non-decreasing from inputs to outputs (hidden states), if all the weights, i.e., $\v W^\ri h, \v W^\ri x, \v D^\ri f, \v D^\ri i,\v D^\ri o$ and $\v D^\ri c$, are non-negative and all the activation functions, i.e., $g^\ri g, g^\ri c$, are smooth, convex, non-decreasing and non-negative.
\end{lemma}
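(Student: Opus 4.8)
The plan is to isolate three elementary preservation rules and then push a single joint invariant --- \emph{convex, non-decreasing, and non-negative} --- through the six cell equations \eqref{eq_IC-LSTM_cell} one line at a time. The three rules I would record first are: (i) a non-negative affine combination of functions that are each convex and non-decreasing is again convex and non-decreasing; (ii) if $g$ is convex, non-decreasing and non-negative and $h$ is convex, non-decreasing and non-negative, then $g\circ h$ inherits all three properties; and (iii) the \textbf{product rule}, that if $\phi$ and $\psi$ are scalar functions of a common argument which are convex, non-decreasing and non-negative, then so is $\phi\psi$. Rule (iii) is immediate for a scalar argument from $(\phi\psi)''=\phi''\psi+2\phi'\psi'+\phi\psi''\ge 0$ together with $(\phi\psi)'=\phi'\psi+\phi\psi'\ge 0$, every summand being non-negative \emph{precisely} because both factors are simultaneously non-negative, convex, and non-decreasing.

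The structural observation that makes the argument go through is the \textbf{shared-weight} design. Writing $\v p_t := \v W^\ri x\vhat x_t + \v W^\ri h\v h_{t-1}$ for the single pre-activation that all gates share, each diagonal matrix $\v D^\ri f,\v D^\ri i,\v D^\ri o,\v D^\ri c$ only rescales $\v p_t$ coordinate-wise, so the $j$-th coordinate of every gate is a function of the \emph{single scalar} $(\v p_t)_j$ alone. With $\v W^\ri x,\v W^\ri h\succeq 0$ and the incoming $\vhat x_t,\v h_{t-1}$ convex and non-decreasing (w.r.t.\ the expanded cell inputs), rule (i) makes each $(\v p_t)_j$ convex and non-decreasing, and composing with the convex, non-decreasing, non-negative activations $g^\ri g,g^\ri c$ via rule (ii) shows $\v f_t,\v i_t,\v o_t,\vtilde c_t$ are all convex, non-decreasing and non-negative. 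I would then treat $\v c_t=\v f_t\ast\v c_{t-1}+\v i_t\ast\vtilde c_t$ coordinate-wise: $\v i_t\ast\vtilde c_t$ is a product of two functions of the \emph{same} scalar $(\v p_t)_j$, so rule (iii) applies verbatim, while $\v f_t\ast\v c_{t-1}$ is handled by multiplying the non-negative convex non-decreasing $\v f_t$ against the non-negative convex non-decreasing $\v c_{t-1}$ (supplied by the base case $\v c_0=\v 0$ and the induction). Since $\v c_t$ is then convex, non-decreasing and non-negative, rule (ii) gives the same for $g^\ri c(\v c_t)$, and $\v h_t=\v o_t\ast g^\ri c(\v c_t)$ closes the argument by one more application of rule (iii).

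The main obstacle is rule (iii) and its use on the Hadamard products, because convexity is \emph{not} preserved by products in general --- the cross term $\nabla\phi\,\nabla\psi^\T+\nabla\psi\,\nabla\phi^\T$ in the Hessian of a product is indefinite, so non-negativity and monotonicity of \emph{both} factors are indispensable. This is exactly why IC-LSTM must impose the extra non-negativity on $g^\ri g,g^\ri c$, a condition absent from ICFNN and ICRNN, which carry no multiplicative gates. The delicate point I would be most careful about is that the product rule is safe only when the two factors are functions of a common argument (or one factor is a non-negative constant/parameter); the shared-weight architecture is what guarantees this coordinate-wise, collapsing the gate products into products of single-variable functions of $(\v p_t)_j$ where $(\phi\psi)''\ge 0$ genuinely holds. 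Accordingly, I expect most of the write-up to consist of making this coordinate-wise reduction explicit and verifying that the triple invariant is truly closed under each of the six operations, with particular attention to the recurrent term $\v f_t\ast\v c_{t-1}$ and the non-negative base case that initializes the induction.
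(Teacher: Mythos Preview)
Your closure-rule strategy is genuinely different from the paper's explicit-Hessian argument, and your structural observation is correct and useful: because $\v D^\ri f,\v D^\ri i,\v D^\ri o,\v D^\ri c$ are diagonal and sit in front of the shared pre-activation $\v p_t:=\v W^\ri x\vhat x_t+\v W^\ri h\v h_{t-1}$, the $j$-th coordinate of every gate is a function of the single scalar $(\v p_t)_j$, and rule~(iii) legitimately disposes of $\v i_t\ast\vtilde c_t$. For convexity in the \emph{current} input $\vhat x_t$ the whole scheme closes, since $\v c_{t-1}$ is then a non-negative constant, $(\v c_t)_j$ depends on $(\v p_t)_j$ alone, and both Hadamard products collapse to single-variable products.

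The gap is at $\v f_t\ast\v c_{t-1}$ (and hence at $\v o_t\ast g^\ri c(\v c_t)$) when you pass to a \emph{past} input $\vhat x_\tau$, $\tau<t$. There $(\v f_t)_j$ depends on $\vhat x_\tau$ through $(\v p_t)_j=\text{const}+\sum_m(\v W^\ri h)_{jm}(\v h_{t-1})_m$, i.e.\ through all coordinates of $\v h_{t-1}$, whereas $(\v c_{t-1})_j$ reaches $\vhat x_\tau$ through an entirely separate recursion; the two factors are functions of the vector $\vhat x_\tau$ via \emph{distinct} pathways, not of a common scalar. Your own caveat that rule~(iii) ``is safe only when the two factors are functions of a common argument'' bites here, and invoking the induction hypothesis to certify that each factor is individually convex, non-decreasing and non-negative does not rescue the product: on $\R[2]_{\ge 0}$, $\phi(x,y)=x$ and $\psi(x,y)=y$ each satisfy all three properties, yet $\phi\psi=xy$ is not convex. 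The indefinite cross term $\nabla\phi\,\nabla\psi^\T+\nabla\psi\,\nabla\phi^\T$ you flagged is precisely what reappears.

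The paper sidesteps this by computing $\nabla^2_{\v x}h$ and $\nabla^2_{\v h^-}h$ explicitly and using that, thanks to weight sharing, the row vectors $\v a,\v d,\v m,\v v$ (and $\v b,\v e,\v u,\v w$) are non-negative scalar multiples of one another; every Hessian then collapses to a single rank-one form $\lambda\,\v a\vt a\succeq 0$. Positive semi-definiteness is propagated to past inputs via the chain rule $\nabla^2_{\v x^-}h=(\nabla_{\v x^-}\v h^-)^\T\nabla^2_{\v h^-}h\,(\nabla_{\v x^-}\v h^-)+\sum_j(\partial_{h_j^-}h)\nabla^2_{\v x^-}h_j^-$ together with $\partial_{h_j^-}h\ge 0$. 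If you want to keep your elementary route, you would need to upgrade the invariant to \emph{joint} convexity and coordinatewise monotonicity of $(\v h_t)_j$ and $(\v c_t)_j$ in the pair $\bigl((\v p_t)_j,(\v c_{t-1})_j\bigr)$, and then compose via the vector form of the composition rule; the present argument does not supply that.
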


\begin{proof}
The non-decreasing property is trivial since all weights are non-negative and all activation functions are non-decreasing. To prove the convexity, we compute the second derivatives and check for the positive semi-definiteness. To make the derivation more notationally clear, we neglect the time index subscript and use $(\diamond)^{\tau -},\ \tau=1,2,3,\dots$ to denote the quantity $\diamond$ in the previous $\tau^{th}$ time step. The number is dropped when $\tau=1$. By omitting the bias terms, the conventional LSTM cell is written as:
\begin{subequations}
\begin{align}
\v i &= g_i(\v A\v x + \v B\v h^-)\\
\v f &= g_f(\v D\v x + \v E\v h^-) \\
\v o &= g_o(\v M\v x + \v U\v h^-) \\
\vtilde c &= g_{\tilde c}(\v V\v x + \v W\v h^-)\\
\v c &= \v f\ast\v c^- + \v i\ast\vtilde c\\
\v h &= \v o\ast\vbar c,\  \vbar c = g_c(\v c)
\end{align}
\end{subequations}

The statement that $\v h$ is convex with respect to $\v x$ implies that each component $h_j(\v x)$ is convex with respect to $\v x$. Without explicitly referring to which $j^{th}$ component, the Hessian matrix\footnote{Except for the gradient $\nabla f$ which is a column vector, we adopt the row-major or numerator layout convention in matrix calculus.} is expressed as:
\begin{equation}\label{eq: hess}
\nabla^2_{\v x} h = \nabla^2 h =  o\nabla^2\bar c + \bar c\nabla^2o + \nabla o(\nabla\bar c)^\T + \nabla\bar c(\nabla o)^\T
\end{equation}
Substituting Eq. \eqref{eq: hess} with the following expressions, where prime denotes the derivative and $\v a,\v d, \v m, \v v$ are transposed row vectors of corresponding matrices,
\begin{align*}
\nabla o &=  g'_o\v m\\
\nabla \bar c &=  g'_c\nabla c\\
\nabla c &= c^-\nabla f + f\nabla c^- + \tilde c\nabla i+ i\nabla\tilde c\\
&= c^- g'_f\v d + \tilde c g'_i\v a + i g'_{\tilde c}\v v\\
\nabla^2 o &=  g''_o\v m\vt m\\
\nabla^2\bar c &=  g''_c\nabla c(\nabla c)^\T +  g'_c\nabla^2 c\\
\nabla^2 c &= c^- g''_f\v d\vt d + \tilde c g''_i\v a\vt a + i g''_{\tilde c}\v v\vt v +  g'_{\tilde c} g'_i(\v a\vt v + \v v\vt a)
\end{align*}
we obtain:
\begin{align}\label{eq: hess_x h}
\nabla^2_{\v x} h
=& [o g''_c(c^- g'_f)^2 + oc^- g'_c g''_f]\v d\vt d + [o g''_c(\tilde c g'_i)^2 + o\tilde c g'_o g''_i]\v a\vt a+ [o g''_c(i g'_{\tilde c})^2 + oi g'_c g''_c]\v v\vt v + \bar c g''_o\v m\vt m\nonumber\\
&+ o\tilde c c^- g''_c g'_i g'_f(\v a\vt d + \v d\vt a) + oic^- g''_c g'_{\tilde c} g'_f(\v d\vt v + \v v\vt d) +  g'_{\tilde c} g'_i(oi\tilde c g''_c + o g'_c)(\v a\vt v + \v v\vt a)\nonumber\\
&+  g'_o g'_c[c^- g'_f(\v m\vt d + \v d\vt m) + \tilde c g'_i(\v m\vt a + \v a\vt m) + i g'_{\tilde c}(\v m\vt v + \v v\vt m)]
\end{align}

In the proposed IC-LSTM cell, note that all activation functions $g$ are convex, non-decreasing, and non-negative, and $\v a = \alpha_x\v d = \beta_x \v m = \gamma_x\v v,\ \alpha_x,\beta_x,\gamma_x\geq 0$ due to the weight sharing and non-negative scaling. Thus, $\nabla^2_{\v x}h$ reduces to $\lambda \v a\vt a$ for some $\lambda \geq 0$, which is positive semi-definite.

Furthermore, to render IC-LSTM convex to all its inputs, we require $h$ to be convex with respect to the previous input $\v x^-$ (and any past input, which will be discussed later). We check the convexity via the Hessian matrix of $h$ with respect to $\v x^-$,
\begin{equation}\label{eq: hess_x- h}
\nabla^2_{\v x^-} h = (\nabla_{\v x^-}\v h^-)^\T\nabla^2_{\v h^-}h(\nabla_{\v x^-}\v h^-) + \sum\limits_{j=1}^{n_h}(\partial_{h^-_j}h)\nabla^2_{\v x^-}h^-_j
\end{equation}

Note that $\nabla^2_{\v x^-}h^-_j$ is a shift of time index of Eq. \eqref{eq: hess_x h}, and thus $\nabla^2_{\v x^-}h^-_j\succeq 0$ is satisfied for the IC-LSTM cell as discussed earlier. Additionally, we obtain $\nabla^2_{\v h^-}h$ by mirroring $\nabla^2_{\v x}h$ as follows:
\begin{align}\label{eq: hess_h- h}
\nabla^2_{\v h^-} h
=& [o g''_c(c^- g'_f)^2 + oc^- g'_c g''_f]\v e\vt e + [o g''_c(\tilde c g'_i)^2 + o\tilde c g'_o g''_i]\v b\vt b+ [o g''_c(i g'_{\tilde c})^2 + oi g'_c g''_c]\v w\vt w + \bar c g''_o\v u\vt u\nonumber\\
&+ o\tilde c c^- g''_c g'_i g'_f(\v b\vt e + \v e\vt b) + oic^- g''_c g'_{\tilde c} g'_f(\v e\vt w + \v w\vt e) +  g'_{\tilde c} g'_i(oi\tilde c g''_c + o g'_c)(\v b\vt w + \v w\vt b)\nonumber\\
&+  g'_o g'_c[c^- g'_f(\v u\vt e + \v e\vt u) + \tilde c g'_i(\v u\vt b + \v b\vt u) + i g'_{\tilde c}(\v u\vt w + \v w\vt u)]
\end{align}
Again, due to the weight sharing and non-negative scaling in the IC-LSTM cell, we have $\v b=\alpha_h\v e=\beta_h\v u=\gamma_h\v w,\ \alpha_h,\beta_h,\gamma_h\geq 0$. Therefore, $\nabla^2_{\v h^-}h = \lambda \v b\vt b\succeq 0$ for some $\lambda\geq 0$.

Lastly, we check for the gradient:
\begin{align}\label{eq: grad_h- h}
\nabla_{\v h^-}h &= \bar c\nabla_{\v h^-} o + o \nabla_{\v h^-}\bar c\nonumber\\
&=\bar c g'_o\v u + o g'_c(c^- g'_f\v e + \tilde c g'_i\v b + i g'_c\v w)
\end{align}
Since all activation functions $g$ are non-decreasing and non-negative, and $u_j,e_j,b_j,w_j\geq 0,\ \forall j$ in the IC-LSTM cell, we have $\partial_{h_j^-}h \geq 0$. Combining all results, $\nabla^2_{\v h^-}h\succeq 0$, $\nabla^2_{\v x^-}h^-_j\succeq 0$, and $\partial_{h^-_j}h\geq 0$, we arrive at $\nabla^2_{\v x^-} h\succeq 0$.

Similarly, it can be found that the derivations of $\nabla^2_{\v x^{\tau-}} h$ for $\tau = 2,3,\dots$ (i.e., Hessians with respect to past inputs) reveal the same patterns as in Eq. \eqref{eq: hess_x- h} due to the recurrent structure of the model. For example, $h$ is convex with respect to the input $\v x^{2-}$ two time steps in the past, i.e., $\nabla^2_{\v x^{2-}}h \succeq 0$ when $\nabla^2_{\v h^-}h\succeq 0, \nabla^2_{\v h^{2-}}h^-_j\succeq 0, \nabla^2_{\v x^{2-}}h^{2-}_k\succeq 0$ and $\partial_{h^-_j}h,\partial_{h^{2-}_k}h^-_j\geq 0,\ \forall j,k$. We realize that all those conditions are satisfied since they are essentially the same as in Eq. \eqref{eq: hess_x h}, Eq. \eqref{eq: hess_h- h}, and Eq. \eqref{eq: grad_h- h} with a change in time index. Therefore, the IC-LSTM cell is input convex when the conditions in Lemma \ref{lemma: IC-LSTM cell} are satisfied.
\end{proof}

\begin{remark}
Without loss of generality, we assume that the activation functions are smooth. In practice, we can still use the rectified linear function, i.e., ReLU, since it is convex, non-decreasing, and non-negative, and is only non-smooth at the origin. Alternatively, we can choose the softplus, i.e., $\log(1+\exp(\beta x))/\beta,\ \beta > 0$, as a smooth approximation of the ReLU.
\end{remark}

Next, we develop the following theorem to show the convexity of $L$-layer IC-LSTM.
\begin{theorem}\label{theorem1}
Consider the $L$-layer IC-LSTM  as shown in Fig. \ref{fig_IC-LSTM}. Each element of the output $\v y_t$ is a convex, non-decreasing function of the input $\vhat x_\tau=\bmat{\vt x_\tau,-\vt x_\tau}^\T$ (or just $\v x_\tau$) at the time step $\tau=t,t-1,\dots,1$, for all $\vhat x_\tau \in D\times D$ in a convex feasible space if all of the following conditions are met: (1) All weights are non-negative; (2) All activation functions are convex, non-decreasing, and non-negative (e.g., ReLU), except for the activation function of the output layer which is convex and non-decreasing (e.g., ReLU, Linear, LogSoftmax).
% (3) The initial hidden state $h_0$ and the initial cell state $c_0$ are convex, non-decreasing and non-negative (e.g., $h_0$ and $c_0$ are initialized as zero vectors).
\end{theorem}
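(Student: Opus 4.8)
The plan is to reduce Theorem~\ref{theorem1} entirely to Lemma~\ref{lemma: IC-LSTM cell} together with the elementary calculus of convexity-preserving operations, proceeding by induction on the layer index $l=1,\dots,L$. The three facts I would invoke repeatedly are: (i) a non-negative weighted sum of convex functions is convex, and is non-decreasing if each summand is; (ii) adding an affine map preserves both convexity and monotonicity; and (iii) the multivariate composition rule, i.e., if an outer map is convex and non-decreasing in each of its arguments and every inner component is convex, then the composition is convex, and it is additionally non-decreasing when the inner components are non-decreasing. Lemma~\ref{lemma: IC-LSTM cell} supplies exactly the outer object needed for (iii): each component of an IC-LSTM cell's hidden state is convex and non-decreasing in all of its (current and past) inputs whenever the stated weight and activation conditions hold.

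First I would establish the base case. By Lemma~\ref{lemma: IC-LSTM cell}, the first cell's hidden state $\v h_{t,1}$ is convex and non-decreasing in the inputs $\vhat x_\tau$ for every $\tau\le t$, the past-input case being already covered inside the lemma's recurrence argument. Passing through the dense-plus-skip layer, $\v z_{t,1}=g^{(d)}[\v W_1^\ri d\v h_{t,1}+\v b_1^\ri d]+\vhat x_t$, I would note that $\v W_1^\ri d\succeq 0$ makes $\v W_1^\ri d\v h_{t,1}+\v b_1^\ri d$ a non-negative combination of convex non-decreasing functions by fact (i), that the convex non-decreasing $g^{(d)}$ preserves these properties by fact (iii), and that the skip term $\vhat x_t$ is affine by fact (ii); hence each component of $\v z_{t,1}$ is convex and non-decreasing in the input sequence. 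For the inductive step I would assume $\v z_{t,l}$ has these properties and feed it as the input to the $(l+1)$-th cell. Here the crucial move is the vector composition: Lemma~\ref{lemma: IC-LSTM cell} gives that $\v h_{t,l+1}$ is convex and non-decreasing in its direct inputs $\{\v z_{\tau,l}\}_{\tau\le t}$, while the inductive hypothesis gives that each $\v z_{\tau,l}$ is convex and non-decreasing in the original inputs; fact (iii) then yields convexity and monotonicity of $\v h_{t,l+1}$, and the same dense-plus-skip argument carries this to $\v z_{t,l+1}$, closing the induction.

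Finally, the output layer $\v y_t=g^{(y)}[\v W^\ri y\v z_{t,L}+\v b^\ri y]$ is handled by one more application of facts (i) and (iii): $\v W^\ri y\succeq 0$ and a convex non-decreasing $g^{(y)}$ preserve convexity, so each component of $\v y_t$ is convex and non-decreasing in $\vhat x_\tau$ for all $\tau\le t$. Convexity in the unexpanded $\v x_\tau$ then follows because $\vhat x_\tau=\bmat{\vt x_\tau,-\vt x_\tau}^\T$ is affine in $\v x_\tau$ by fact (ii); I would note that monotonicity is only claimed with respect to the expanded input, since the $-\v x_\tau$ block breaks it for $\v x_\tau$.

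I expect the main obstacle to be the inductive step rather than any single calculation: because the network is simultaneously recurrent and layered, the original input enters $\v y_t$ along many paths, directly through each layer's cell, through the skip connections, and indirectly through every hidden state $\v h_{t-1,l}$ carrying past inputs, and one must verify that convexity and, critically, monotonicity in every argument are preserved along all of them at once, so that the non-decreasing hypothesis required by the composition rule is never violated. In particular I would take care that the intermediate signals handed to each successive cell keep the activation and weight regime of Lemma~\ref{lemma: IC-LSTM cell} in force (non-negative activation $g^{(d)}$ and non-negative weights), since it is the monotonicity of each cell in all of its inputs, not merely their convexity, that makes the repeated composition legitimate.
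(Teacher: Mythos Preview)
Your proposal is correct and takes essentially the same approach as the paper: the paper's proof is a single sentence invoking Lemma~\ref{lemma: IC-LSTM cell} together with the standard facts that affine maps with non-negative matrices and compositions of convex non-decreasing functions preserve convexity. You have simply spelled out in detail the layer-by-layer induction that this sentence compresses, including the dense-plus-skip and output layers, and your care about monotonicity in every argument along every path is exactly the content that makes the one-line appeal to the composition rule legitimate.
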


\begin{proof}
With Lemma \ref{lemma: IC-LSTM cell}, the proof directly follows from the fact that affine transformations with non-negative matrices and compositions of convex non-decreasing functions preserve convexity \citep{boyd2004convex}. 
\end{proof}

\subsection{Implementation of IC-LSTM Cell}
In this subsection, we provide a brief TensorFlow Keras \citep{chollet2015keras} implementation of the IC-LSTM cell using Python, with the complete code available at \url{https://github.com/killingbear999/IC-LSTM}. 
Due to the architectural differences (i.e., the hidden state computation of the IC-LSTM cell (see Eq. \eqref{eq_IC-LSTM_cell}) differs from the LSTM cell (see Eq. \eqref{eq_lstm}) to preserve convexity), we customize the IC-LSTM cell using Keras's custom RNN layer. To customize the IC-LSTM cell, we first define all the weights and biases, treating scaling vectors as trainable weights. Specifically, we set the initialization techniques and constraints on all weights and biases. All weights (i.e., $\v W^\ri h, \v W^\ri x$) and scaling vectors (i.e., $\v D^\ri f, \v D^\ri i, \v D^\ri o, \v D^\ri c$) are trainable (i.e., can be updated during backpropagation) and subject to a non-negative constraint. It is recommended to initialize $\v W^\ri x$ using random normal initializer with a mean of 0 and a standard deviation of 0.01, $\v W^\ri h$ using orthogonal initializer or identity initializer with a gain of 0.1, $\v D^\ri f, \v D^\ri i, \v D^\ri o, \v D^\ri c$ using random uniform initializer with a minimum of 0 and a maximum of 1. All biases (i.e., $\v b^\ri f, \v b^\ri i, \v b^\ri o, \v b^\ri c$) are trainable without any constraints, and it is recommended to initialize them using a zero initializer. All initializers are available in Keras. Since the IC-LSTM requires non-negative constraints on weights, this constraint is enforced by clipping all negative values to 0 (see Listing \ref{listing1}).

Subsequently, we compute the hidden state according to Eq. \eqref{eq_IC-LSTM_cell}. Finally, we recursively compute the hidden state and the output, and update the weights using an optimizer such as Adam \citep{kingma2014adam}, which will be managed by Keras.

\begin{lstlisting}[language=Python, caption=Keras implementation of a non-negative constraint, label={listing1}]
import keras
from keras import ops

class NonNegative(keras.constraints.Constraint):
    def __call__(self, w):
        return w * ops.cast(ops.greater_equal(w, 0.), dtype=w.dtype)
\end{lstlisting}

\subsection{Toy Examples: Surface Fitting}
To demonstrate the input convexity of IC-LSTM, we trained the model for a 2-dimensional regression task. Three toy datasets were constructed based on the following non-convex bivariate scalar functions:
\begin{subequations}
\begin{align}
    \label{eq_f1} f_1(x,y) & = -\cos(4x^2 + 4y^2) \\
    \label{eq_f2} f_2(x,y) & = \max(\min(x^2 + y^2, (2x-1)^2+(2y-1)^2-2), -(2x+1)^2 - (2y+1)^2 +4) \\
    \label{eq_f3} f_3(x,y) & = x^2(4-2.1x^2+x^{\frac{4}{3}}) - 4y^2(1-y^2) + xy
\end{align}
\end{subequations}

Given its input convex architecture, IC-LSTM is expected to transform these non-convex functions into convex representations. As shown in Fig. \ref{fig_f1}, IC-LSTM exhibits input convexity in modeling functions $f_1$, $f_2$, and $f_3$, while it struggles to explicitly fit these functions.

\begin{figure}[ht!]
    \centering
    \begin{subfigure}[t]{0.31\textwidth}
        \centering
        \includegraphics[width=\columnwidth]{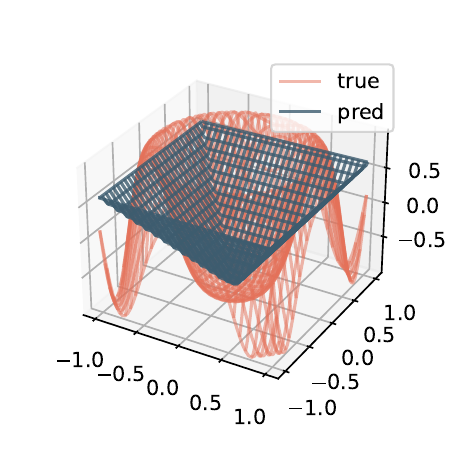}
        \caption{$f_1(x,y) = -\cos(4x^2 + 4y^2)$}
        \label{fig_f1_IC-LSTM}
    \end{subfigure}
    ~ 
    \begin{subfigure}[t]{0.31\textwidth}
        \centering
        \includegraphics[width=\columnwidth]{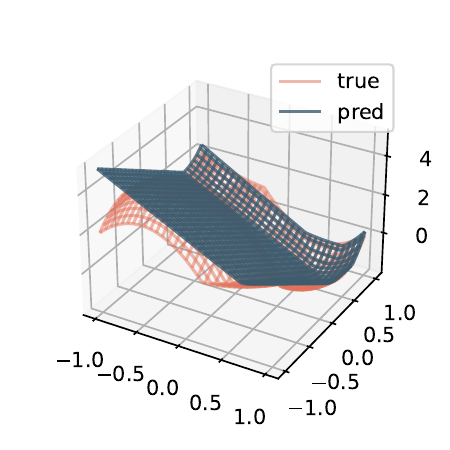}
        \caption{$f_2(x,y) = \max(\min(x^2 + y^2, (2x-1)^2+(2y-1)^2-2), -(2x+1)^2 - (2y+1)^2 +4)$}
        \label{fig_f2_IC-LSTM}
    \end{subfigure}
    ~
    \begin{subfigure}[t]{0.31\textwidth}
        \centering
        \includegraphics[width=\columnwidth]{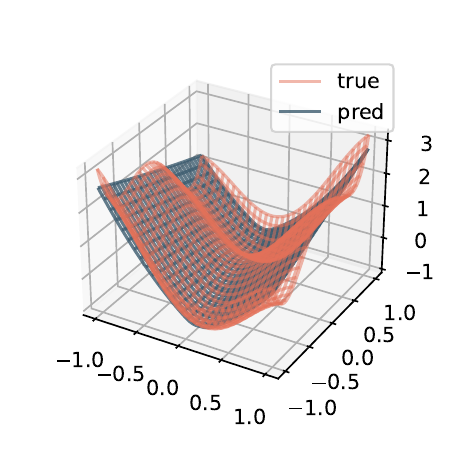}
        \caption{$f_3(x,y) = x^2(4-2.1x^2+x^{\frac{4}{3}}) - 4y^2(1-y^2) + xy$}
        \label{fig_f3_IC-LSTM}
    \end{subfigure}
    \caption{3D plots of bivariate scalar functions, where `true' represents the underlying non-convex function and `pred' represents the convex form learned by IC-LSTM.}
    \label{fig_f1}
\end{figure}

\begin{remark}
\label{remark4}
ICNNs offer benefits like global optimality and stability in optimization problems but may sacrifice accuracy in modeling highly non-convex functions due to their inherent convex nature. However, they remain effective for practical systems with relatively low non-convexity, providing efficient solutions for neural network-based optimization without compromising desired accuracy to a significant extent. Comparing ICNNs' testing losses with traditional neural networks helps assess their performance (i.e., achieving similar accuracy levels validates ICNNs as viable approximations for nonlinear systems). Additionally, leveraging partially input convex neural networks, as proposed by \cite{amos2017input}, can enhance the representative power of ICNNs by ensuring that the output remains convex with respect to specific input elements only. In conclusion, it is advisable for users to carefully evaluate the task requirements on a case-by-case basis when considering the use of ICNNs. Generally, ICNNs are recommended for real-time optimization tasks where computational speed is critical.%, and the accuracy of modeling dynamic processes is of secondary importance.
\end{remark}

%%%%%%%%%%%%%%%%%%%%
\section{IC-LSTM-Based Optimization}
\label{sec3}
Under certain assumptions on the cost function and state constraints, ICNN models result in convex optimization problems that can be solved to a global optimum in real time \citep{bunning2022physics}. Thus, we utilize an IC-LSTM to model the state transition dynamics, as expressed by $\dot{\vtilde x}(t) = F_{nn}(\vtilde x(t),\v u(t))$ in Eq. \eqref{eq9b}. This neural network is then embedded into a finite-horizon optimization problem as designed in Eq. \eqref{eq9}. The primary objective of this integration is to determine the optimal sequence of actions, denoted as $\v u^*_t, \v u^*_{t+1}, \ldots, \v u^*_{t+N-1}$, for a predetermined prediction horizon $N$. We first present a lemma to show the sufficient conditions for the optimization problem in Eq. \eqref{eq9} to be convex.

\begin{lemma}\label{lemma6}
    By embedding IC-LSTM into Eq. \eqref{eq9}, the optimization problem is considered as a convex optimization problem if both the objective function and the constraints are convex.
\end{lemma}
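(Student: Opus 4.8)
The plan is to reduce the lemma to the standard composition rules for convexity, building on the convexity of the IC-LSTM established in Theorem \ref{theorem1}. First I would eliminate the dynamic equality constraint in Eq. \eqref{eq9b} by rolling out the IC-LSTM over the prediction horizon: starting from the fixed initial condition $\vtilde x(t_k) = \v x(t_k)$ in Eq. \eqref{eq9d}, each predicted state is obtained by recursively applying the IC-LSTM map, so the entire predicted trajectory $\vtilde x(t)$ becomes an explicit function of the sole decision variable, namely the control sequence $\v u \in S(\Delta)$. By Theorem \ref{theorem1}, each component of the IC-LSTM output is convex and non-decreasing in its inputs; since the recurrent rollout is a composition of such maps, and composition of convex non-decreasing functions preserves both properties \citep{boyd2004convex}, every predicted state $\vtilde x(t)$ is a convex and non-decreasing function of $\v u$.

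Next I would address the objective in Eq. \eqref{eq9a}. After substitution, the integrand is $J(\vtilde x(\v u),\v u)$, a composition of the convex map $\v u \mapsto \vtilde x(\v u)$ with the cost $J$. Invoking the vector composition rule, this composition is convex in $\v u$ provided $J$ is convex and non-decreasing in the state argument, with the control argument entering affinely so that no monotonicity is required there. Because the integral (or, after discretization with piecewise constant $\v u$, the finite sum) of convex functions over the horizon is again convex, the objective $\mathcal{L}$ is convex in $\v u$.

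I would then handle the feasible set. The input constraint $\v u \in U$ in Eq. \eqref{eq9c} is a convex set by assumption, and any state constraint written as $g(\vtilde x(\v u)) \le 0$ with $g$ convex and non-decreasing defines a convex sublevel set by the same composition argument. Since the intersection of convex sets is convex, the feasible region is convex, and minimizing a convex objective over a convex feasible set yields a convex optimization problem, which establishes the claim.

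The main obstacle, and the point I would be most careful to state precisely, is that the hypothesis ``the objective function and the constraints are convex'' is strictly speaking not sufficient on its own: because the predicted states enter through the genuinely convex (not merely affine) map $\vtilde x(\v u)$, the composition rule additionally requires $J$ and any state-constraint functions to be non-decreasing in the state arguments. I would therefore make this monotonicity requirement explicit, noting that it is exactly what couples the convexity of the IC-LSTM model from Theorem \ref{theorem1} to the convexity of the resulting optimization problem; without it, convexity of the outer functions alone does not propagate through the composition.
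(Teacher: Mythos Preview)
Your argument is correct, but it is doing considerably more work than the paper does for this lemma. In the paper, Lemma~\ref{lemma6} is stated without proof and is essentially definitional: once the objective function and the constraints are convex (in the decision variable), the problem is, by definition, a convex optimization problem. The nontrivial content you are supplying---rolling out the IC-LSTM over the horizon, invoking Theorem~\ref{theorem1} for convexity and monotonicity of each step, and then applying the composition rule from \citep{boyd2004convex} to push convexity through the multi-step prediction---is exactly what the paper defers to Theorem~\ref{theorem2} and the surrounding remarks (where the non-negativity/monotonicity requirements on the objective are discussed). So your proposal is not wrong; it simply collapses Lemma~\ref{lemma6} and Theorem~\ref{theorem2} into one argument, whereas the paper treats Lemma~\ref{lemma6} as a tautological sufficient condition and postpones the composition analysis.

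Your final paragraph is the most valuable part: you correctly flag that ``objective and constraints are convex'' must be read as convex \emph{in the decision variable after substitution}, and that this only follows from convexity of $J$ and the state-constraint functions if they are also non-decreasing in the state argument. The paper handles this implicitly (via the non-negative output and quadratic-objective discussion in the remark after Theorem~\ref{theorem2}), but your explicit statement of the monotonicity hypothesis is sharper than what the paper writes for Lemma~\ref{lemma6} itself.
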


Next, we develop the following theorem to show that a convex optimization problem with multi-step ahead prediction remains convex. 
\begin{theorem}\label{theorem2}
Consider the neural network-based convex MPC problem in Eq. \eqref{eq9}. The problem remains input convex in the face of multi-step ahead prediction (i.e., when the prediction horizon $N > 1$) if and only if the neural network embedded is inherently input convex and non-decreasing (e.g., IC-LSTM) and the neural network output is non-negative for certain objective functions (e.g., quadratic functions and absolute functions).
\end{theorem}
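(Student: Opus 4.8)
The plan is to read Eq.~\eqref{eq9} as a finite-dimensional program in the stacked control sequence $\v u_k,\v u_{k+1},\dots,\v u_{k+N-1}$ and to establish that the objective $\mathcal L$ is convex on the (convex) feasible set, at which point Lemma~\ref{lemma6} closes the argument. First I would eliminate the dynamic equality constraint Eq.~\eqref{eq9b}: with the fixed initial condition $\vtilde{x}(t_k)=\v x(t_k)$, embedding the IC-LSTM makes the entire predicted trajectory $\vtilde{x}_{k+1},\dots,\vtilde{x}_{k+N}$ an explicit output of the network driven by the control sequence, so the only decision variables are the controls, which live in the convex set $U$ of Eq.~\eqref{eq9c}.

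For the sufficiency (``if'') direction, the central fact is that, by Theorem~\ref{theorem1}, each component of every predicted state $\vtilde{x}_{k+\ell}$ is a convex and \emph{non-decreasing} function of the control sequence, even for $N>1$. I would stress exactly why both halves of this are needed and why the multi-step case is the delicate one: unrolling the prediction composes the network with its own (convex) outputs, so each additional step is a ``convex-of-convex'' composition that is preserved only when the outer map is convex \emph{and} non-decreasing \citep{boyd2004convex}. For $N=1$ the controls enter a single outermost map and monotonicity is irrelevant, whereas for $N>1$ dropping the non-decreasing property breaks the composition and hence the convexity of the trajectory. This is precisely the content that Theorem~\ref{theorem1} supplies for IC-LSTM.

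It then remains to pass from convex states to a convex objective, and this is where the non-negativity condition enters. The running cost $J(\vtilde{x},\v u)$ is typically convex but \emph{non-monotone} in $\vtilde{x}$: the quadratic $\norm{\vtilde{x}}^2$ and the absolute value $|\vtilde{x}|$ each decrease on the negative axis, so composing them with a convex (non-affine) state map need not yield a convex function of the controls. Requiring the network output to satisfy $\vtilde{x}\succeq 0$ confines the states to the region on which $s\mapsto s^2$ and $s\mapsto|s|$ are non-decreasing, so the composition rule applies and each integrand is convex in the control sequence; integrating over the horizon preserves convexity, giving a convex $\mathcal L$ and, by Lemma~\ref{lemma6}, a convex program.

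For the necessity (``only if'') direction I would argue by contraposition on each condition in turn: abandoning input convexity already destroys convexity at $N=1$; abandoning monotonicity of the embedded network lets the composition step fail, and I would exhibit a two-step instance in which a convex-but-decreasing map composed with a convex inner map yields a non-convex predicted state; and permitting negative outputs makes $\norm{\vtilde{x}}^2$ non-convex along a control direction across which some state component changes sign. The hard part will be this necessity argument: turning the monotonicity and non-negativity failures into rigorous counterexamples — rather than merely plausible ones — requires pinning down explicit control directions along which the Hessian of the unrolled objective becomes indefinite, while also making the ``for certain objective functions'' qualifier precise so that the stated equivalence is exact rather than heuristic.
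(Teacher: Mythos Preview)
Your sufficiency argument is correct and rests on the same engine as the paper's proof --- preservation of convexity under composition with a convex, non-decreasing outer map --- but the two presentations differ in framing. The paper unrolls the $N$-step prediction and \emph{reinterprets} the composite $F_{nn}\circ\cdots\circ F_{nn}$ as a single $NL$-layer IC-LSTM in which the later controls $\v u_{k+1},\dots,\v u_{k+N-1}$ enter as fresh inputs concatenated at every $L^{\text{th}}$ layer; it then simply invokes Theorem~\ref{theorem1} on this deeper network. You instead keep the step-by-step recursion visible and appeal directly to the ``convex non-decreasing of convex'' composition rule at each stage. The two are equivalent, but each buys something: the paper's stacking viewpoint makes it immediate that \emph{all} the structural conditions of Theorem~\ref{theorem1} (non-negative weights, non-negative activations, etc.) carry through to the unrolled map without re-derivation, whereas your formulation is more transparent about \emph{why} monotonicity is dispensable at $N=1$ but indispensable for $N>1$, and it isolates cleanly the separate step of composing the (non-monotone) running cost $J$ with the predicted state.

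On that last point you actually go further than the paper. The paper's proof stops once the unrolled state map is shown to be input convex; it does not spell out the passage from convex $\vtilde{x}$ to convex $\mathcal L$, leaving the role of output non-negativity to the surrounding remarks. Your explicit treatment of why $\vtilde{x}\succeq 0$ is needed for quadratic or absolute running costs is a genuine addition. Likewise, the paper offers no argument for the ``only if'' direction at all --- the printed proof is sufficiency only --- so your contrapositive sketch, and your closing caveat that the biconditional is not sharp as stated, are on target rather than a gap. If you pursue the necessity counterexamples, be aware that you are proving more than the paper does.
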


\begin{proof}
The proof of Theorem \ref{theorem2} is intuitive. Consider a $2$-step ahead prediction problem (i.e., $N = 2$) with a $L$-layer embedded IC-LSTM $\v f_t(\v x_t,\v u_t)$, the final output is $\v y_2 = \v f_2(\v x_2 = \v f_1(\v x_1, \v u_1), \v u_2)$, where $\v x$ is the input. It is equivalent to a $1$-step ahead prediction problem with a $2L$-layer embedded IC-LSTM but with a new input $\v u_2$ concatenated at the output of the $L^{th}$ layer. Without loss of generality from Theorem \ref{theorem1}, the $2$-step ahead prediction remains input convex. Hence, without loss of generality, a $N$-step ahead prediction problem with a $L$-layer embedded IC-LSTM is equivalent to a $1$-step ahead prediction problem with a $NL$-layer embedded IC-LSTM with new inputs $\v u_t$ concatenated at the output of every $L^{th}$ layer, which is indeed input convex.
\end{proof}

\begin{remark}
The output of IC-LSTM should be carefully designed to meet the specific requirements of the task. For example, to make the quadratic objective function $\mathcal{L}(\v u) = \v\phi(\v u)^\T\v Q\v\phi(\v u)$, where $\v \phi(\v u)$ is a vector-valued function, convex in terms of the vector $\v u$, we require $\v Q$ to be a positive semidefinite matrix of an appropriate dimension, each component of $\v\phi(\v u)$ to be a convex function with respect to the input vector, and all values $\phi_i(\v u)\geq 0$. To ensure non-negative outputs and maintain convexity in the quadratic objective, ReLU activation can be used in the output layer. Alternatively, training data can be preprocessed so that the IC-LSTM is trained to learn the absolute value of the system state \citep{wang2024fast}. In general, it should be noted that while applying ICNNs can contribute to convexity, it does not guarantee that the overall optimization task will be convex. Other factors, such as objective functions and constraints, also affect its convexity.
\end{remark}

% \begin{remark}
% In general, for a quadratic objective function, e.g. $\mathcal{L}(\v u) = \v\phi(\v u)^\T\v Q\v\phi(\v u)$, where $\v \phi(\v u)$ is a vector-valued function, to be convex in terms of the vector $\v u$, we require $\v Q$ being a positive semi-definite matrix, each component of $\v\phi(\v u)$ being a convex function with respect to the input vector, and all values $\phi_i(\v u)\geq 0$.
% \end{remark}

%%%%%%%%%%%%%%%%%%%%

%%%%%%%%%%%%%%%%%%%%%%%%%%%%%%%%
\section{Application to a Solar PV Energy System}
\label{sec_solarpv}
\subsection{System Description}
In this case study, we apply the MPC control to a hybrid energy system in the context of LHT Holdings, a wood pallet manufacturing industry based in Singapore (see Fig. \ref{fig_pipeline} for a detailed manufacturing pipeline of LHT Holdings), with the aim of maximizing the supply of solar energy for environmental sustainability. Before the installation of the solar PV system, LHT Holdings relied solely on the main utility grid to fulfill its energy needs. The solar PV system was successfully installed by 10 Degree Solar in late 2022. Moreover, the Solar Energy Research Institute of Singapore (SERIS) and the Singapore Institute of Manufacturing Technology (SIMTech) have installed various sensors for monitoring the solar PV system. Data such as average global solar irradiance, ambient humidity, module temperature, wind speed, and wind direction are uploaded to an online system on a minute-by-minute basis (see Fig. \ref{fig_lht_solar} for the actual solar PV system and sensors). As illustrated in Fig. \ref{fig_lht}, the factory draws power from the solar PV system, the main power grid, and the battery. Specifically, the solar PV system serves as the primary energy source for the industrial facility, while the main utility grid and the batteries act as secondary energy sources to supplement any deficiencies in solar energy production. Any surplus solar energy generated beyond the current requirements is stored in batteries for future use.

\begin{figure}[ht!]
\centering
\includegraphics[width=\textwidth]{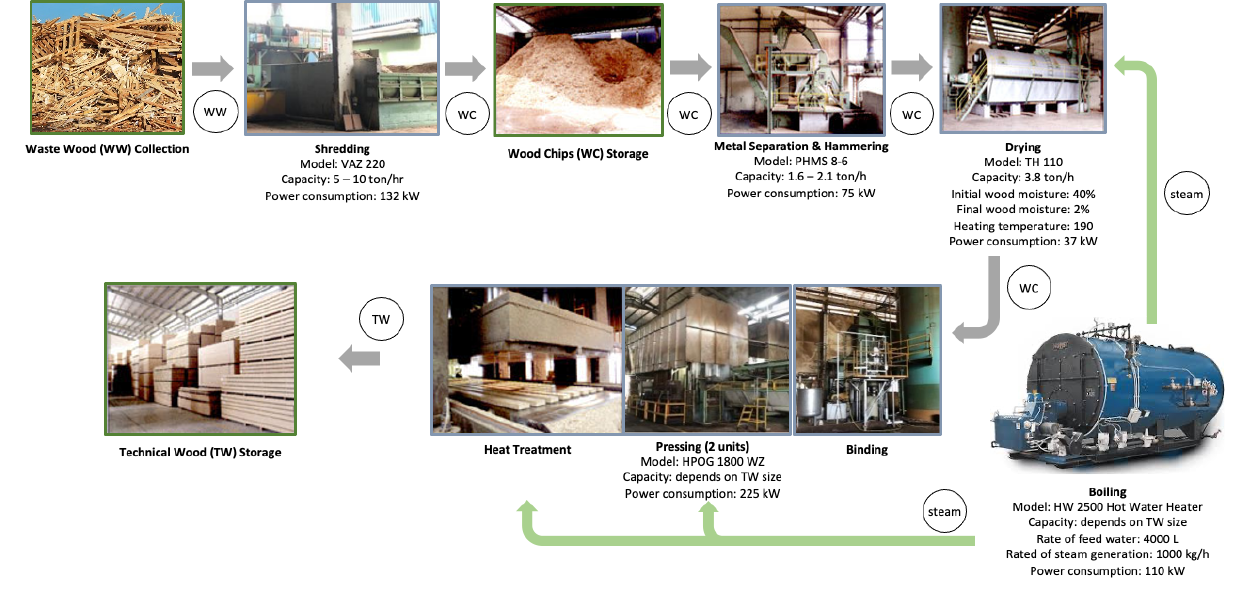}
\caption{LHT Holdings technical wood production pipeline.}
\label{fig_pipeline}
\end{figure}

\begin{figure}[ht!]
\centering
\includegraphics[width=0.7\textwidth]{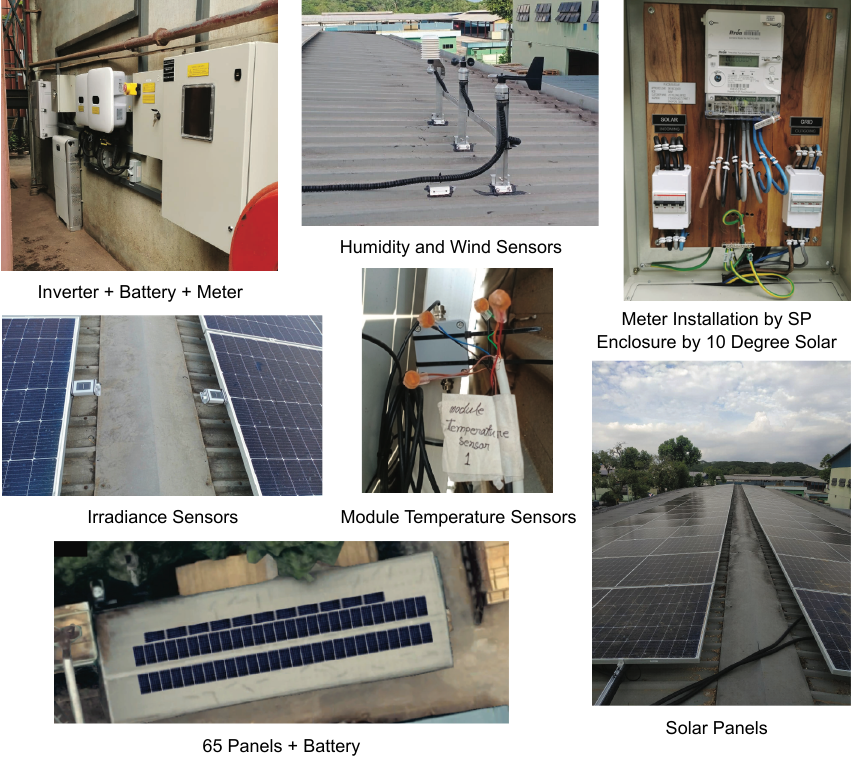}
\caption{LHT Holdings solar PV system.}
\label{fig_lht_solar}
\end{figure}

For the solar PV system, we adopt the \textit{solar PV-converter-battery} model (see Fig. \ref{fig: spvb_drawing}) from \cite{valenciaga2001power} and study the real-time control problem based on this model. The system consists of a solar PV panel, a buck DC/DC converter, and a battery connected in parallel to the panel.
\begin{figure}[ht!]
    \centering
    \begin{subfigure}[t]{0.47\textwidth}
        \centering
        \includegraphics[width=\columnwidth]{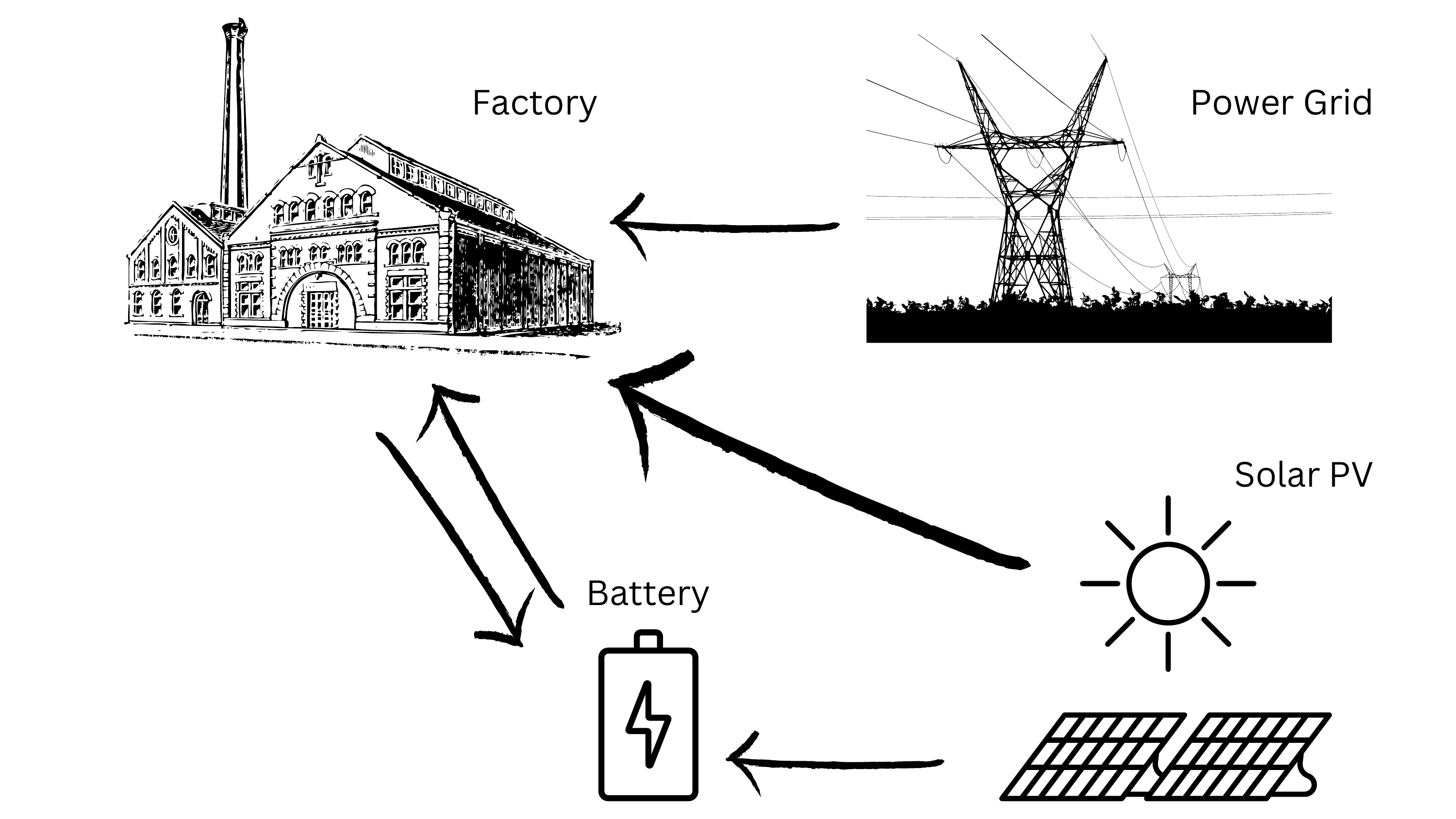}
        \caption{Energy system at LHT Holdings}
        \label{fig_lht}
    \end{subfigure}
    ~ 
    \begin{subfigure}[t]{0.48\textwidth}
        \centering
        \includegraphics[width=\columnwidth]{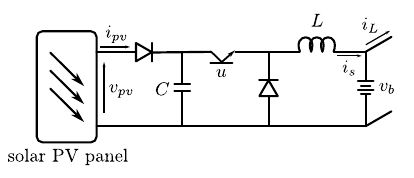}
        \caption{Illustration of the solar PV energy system}
        \label{fig: spvb_drawing}
    \end{subfigure}
    \caption{Schematics of (a) integrated solar PV, battery, factory, and power grid system at LHT Holdings, and (b) solar PV panel.}
    \label{fig_energy_system}
\end{figure}
The system's dynamics are governed by the following ODEs:
\begin{subequations}
\begin{align}
\dv{v_{pv}}{t} &= \frac{1}{C}(i_{pv} - i_s u)\\
\dv{i_s}{t}&=\frac{1}{L}(-v_b + v_{pv} u)\\
\dv{v_c}{t}&= \frac{1}{C_b}(i_s - i_L)
\end{align}
\end{subequations}
where $v_{pv}$ and $i_{pv}$ are the terminal voltage and the output current of the solar PV panel respectively; $i_s$ is the effective output current of the solar PV system; $G$ is the global horizontal irradiance; $T$ is the cell temperature; $v_c$ is the voltage across the internal capacitor of the battery; $i_L$ is the load current (i.e., factory's demand). From \cite{kumar2018solar}, the solar PV output current $i_{pv}$ is a function of $G,T$ and $v_{pv}$,
\begin{equation}
i_{pv} = n_p I_{ph} - n_p I_{s}\bqty{\exp\qty{\frac{q(v_{pv} + i_{pv}R_s)}{n_s AKT}}-1}
\end{equation}
where the photocurrent $I_{ph}$ and saturation current $I_s$ are functions of irradiance $G$ and cell temperature $T$:
\begin{align}
I_{ph} &= \qty{I_{sc} + K_i(T-T_r)}\frac{G}{G_r}\\
I_s&=I_{rs}\qty{\frac{T}{T_r}}^3\exp\qty{\qty{\frac{q E_g}{AK}}\qty{\frac{1}{T_r}-\frac{1}{T}}}\\
I_{rs}&=\frac{I_{sc}}{\exp\qty{\frac{q V_{oc}}{n_s K A T}}-1}
\end{align}
A detailed description of the parameters shown in these equations can be  found in Table \ref{tab: solar details}.
% tab: solar details
\begin{table}
\centering
\caption{Parameters and values of the solar example. Solar PV JAM72S30-545 specifications \citep{JAM72S30}}
\label{tab: solar details}
\begin{tabular}{l|c|l}
\toprule[1pt]
\textbf{Parameter}                                  & \textbf{Symbol}    & \textbf{Value}                   \\
\midrule
Capacitance and inductance of the DC/DC converter   & $C, L$             & $0.004~F$, $0.005~H$             \\
Battery capacitance                                 & $C_b$              & $1.8\times 10^5~F$               \\
Voltage source and resistance of the battery        & $E_b, R_b$         & $12~V$, $0.018~\Omega$            \\
Number of PV cells connected in parallel and series & $n_p,n_s$          & 1, 144                           \\
Electron charge                                     & $q$                & $1.6\times 10^{-19}~C$            \\
Series resistance                                   & $R_s$              & $0.05~\Omega$                     \\
Ideal factor of diode                               & $A$                & 1.3                              \\
Boltzmann constant                                  & $K$                & $1.38025\times 10^{-23}~J/K$      \\
Cell short-circuit current temperature coefficient  & $K_i$              & $0.045\%~A/K$                     \\
Reference temperature and irradiance                & $T_r,G_r$          & $298.15~K$, $1000~W/m^2$           \\
Cell open circuit voltage                           & $V_{oc}$           & $49.75~V$                        \\
Short circuit current                               & $I_{sc}$           & $13.93~A$                         \\
Band gap energy of the semiconductor                & $E_g$              & $1.1~eV$                          \\
\bottomrule[1pt]
\end{tabular}
\end{table}

In summary, we write the above equations as the following general continuous-time nonlinear system:
\begin{equation}\label{eq: solar dyn}
\vdot x(t) = F(\v x(t), u(t), \v \xi(t))
\end{equation}
where $\v x$ represents the state vector $\v x = [v_{pv}, i_s, v_c]^\T$; $u$ is the manipulated input (i.e., duty cycle) to the converter; $\v \xi = [G, T, i_L]^\T$ are external variables, that are independent of the state and the manipulated input.

\subsection{MPC Formulation}
In Eq. \eqref{eq: solar dyn}, the external variables $\v\xi$ are often provided by real-time sensors. However, we may only rely on some estimations of $\v \xi$ in MPC since it requires evaluations of the model at a few future time steps. To simplify the formulation of the optimization problem, we assume that the values of $\v \xi$ during a short period in the future are known in advance (i.e., $\v\xi(t_0),\dots,\v\xi(t_{N-1})$ at a regular time interval $\Delta=60~s$, for $t_n = t+n\Delta,\ n=0,1,\dots, N-1$). For example, the predictions of $G$ and $T$ can be obtained by developing a neural network to learn the pattern, e.g., an Input Convex Lipschitz RNN (ICLRNN) that has been developed in our previous work~\citep{wang2024input}. Moreover, a factory's daily base electricity consumption is more or less predictable and can be reliably forecasted for a short period. Flexible electricity consumption, much like weather forecasting, can be predicted using a suitable machine learning method with appropriate data collection.

%In the open-loop simulations, we learn the discretized version of the system dynamics Eq. \eqref{eq: solar dyn} using a neural network. 
To ensure the overall optimization is convex, we augment the state vector with an additional term that measures the absolute deviation in the current from the load, i.e., $d_t = |i_{s,t} - i_{L,t}|$. Denoting the augmented state vector as $\v z_t = \bmat{\v x_t \\ d_t}$, we train a neural network $F_{nn}$ to approximate Eq. \eqref{eq: solar dyn} in discrete-time:
\begin{equation}\label{eq: solar dyn discrete}
\vtilde z_{t+1} = F_{nn}(\v x_t, u_t, \v\xi_t)
\end{equation}
Note that in the solar PV example, all values in the augmented state are non-negative, and thus we apply the ReLU activation to the final outputs of our neural network.

In closed-loop simulations, we set the prediction horizon to two, and thus the decision variable is $\v u = [u_1, u_2]^\T$. The objective of the MPC is a sum of squared deviations of currents from the loads:
\begin{equation}
\mathcal{L}(\v u; \v x_t, \v\xi_t,\v\xi_{t+1}) = \qty{\tilde d_{t+1}}^2 + \qty{\tilde d_{t+2}}^2
\end{equation}
The value $\tilde d_{t+2}$ is obtained by feeding back the neural network with the predicted state $\vtilde x_{t+1}$ made by itself at the first step, i.e., $[{\vtilde x_{t+2}}^\T, \tilde d_{t+2}]^\T = F_{nn}(\vtilde x_{t+1}, u_2, \v\xi_{t+1})$.

The control actions are bounded by $u_\text{max}=0.95$ and $u_\text{min}=0.1$. Similar to \cite{qi2010supervisory}, we set constraints on: (1) the voltage of the battery $v_b = E_b + v_c + (i_s - i_L)R_b$ in $11.7~V\sim 14.7~V$, to avoid overcharging or complete drainage; (2) the magnitude of change in $i_s$, i.e., $|\tilde i_{s,t+1}-i_{s,t}| \leq \delta_\text{max}$ and $|\tilde i_{s,t+2} - i_{s,t}|\leq \delta_\text{max}$, where $\delta_\text{max}=8~A$; (3) $v_{pv}$ in $10~V \sim 60~V$ as the operating range.

\subsection{Process modeling}
To demonstrate the modeling behaviors of IC-LSTM, we trained two models, LSTM and IC-LSTM, each with a hidden layer configuration of $(64, 64)$. The inputs, $[\vt x_t, G_t, T_t, u_t, i_{L,t}]\in\R[7]$, are uniformly distributed in the rectangular domain with a lower bound $[10~V,0~A,0~V,10~W/m^2, $ $15~K, 0.1, 0~A]$ and a upper bound $[40~V, 20~A, 1~V, $ $1100~W/m^2, 70~K, 0.95, 25~A]$. To make the training data compatible with the recurrent model, each input is repeated $m$ times to become a sequence of length $m$, where $m$ is a positive integer. As mentioned in Section \ref{sec: iclstm}, we further expand the inputs to include the \textbf{negated values}. Thus, each training input is in $\R[m\times 14]$. The targets are trajectories of the augmented state $\v z_{t:t+\Delta}\in\R[m\times 4]$ in the interval $\Delta$ that are recorded at every $\Delta_s < \Delta$ time step, and such that $\Delta = m\Delta_s$. To improve training stability and model accuracy, we normalize the data with the \texttt{MinMaxScaler} method from SciPy \citep{virtanen2020scipy}. In the experiments, we set $m=10$, and the batch size of 128. All models are trained based on the mean squared error (MSE) loss, and the Adam optimizer with an initial learning rate of $0.001$ that will be halved when the testing loss is not decreasing. The final testing MSE of the LSTM reaches $1.62\times 10^{-4}$ while that of the IC-LSTM is $7.40\times 10^{-3}$.

The inferior performance of the IC-LSTM in modeling accuracy is  expected due to the process of convexification as discussed in Section \ref{sec: iclstm}. In Fig. \ref{fig: solar sim vs nn}, we plot the states in dashed lines by integrating the first-principles model (Eq. \eqref{eq: solar dyn}) for one sampling time $\Delta=60~s$, starting with the same initial conditions, $v_{pv}=25~V, i_s=5~A, v_c=1~V, G=500~W/m^2, T= 55~^{\circ} C$ and $i_L=7~A$, and different values of the duty cycle $u\in[0.1, 0.95]$. The solid lines are obtained by calling the trained models as in Eq. \eqref{eq: solar dyn discrete}. It is readily shown that for the IC-LSTM, the outputs are convex functions of the control action. However, local features of the target system are inevitably lost in a global convex representation (this is the drawback for any input-convex neural networks), and therefore, IC-LSTM cannot fully capture the peak in the output current and the narrow basin in the absolute deviation. While the modeling accuracy of IC-LSTM is reduced due to convexity, we will demonstrate in the next subsection on control performance that the degradation in modeling accuracy does not significantly impact closed-loop performance.

\begin{figure}[ht!]
    \centering
    \begin{subfigure}[t]{\textwidth}
        \centering
        \includegraphics[width=\columnwidth]{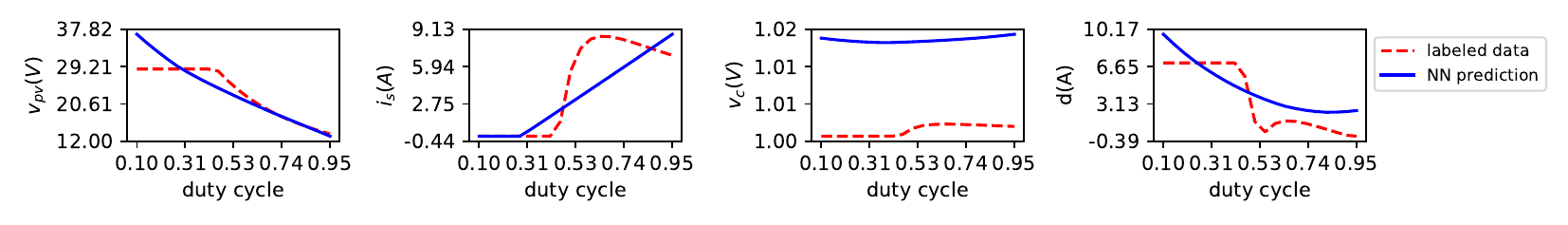}
        \caption{IC-LSTM}
    \end{subfigure}
    ~
    \begin{subfigure}[t]{\textwidth}
        \centering
        \includegraphics[width=\columnwidth]{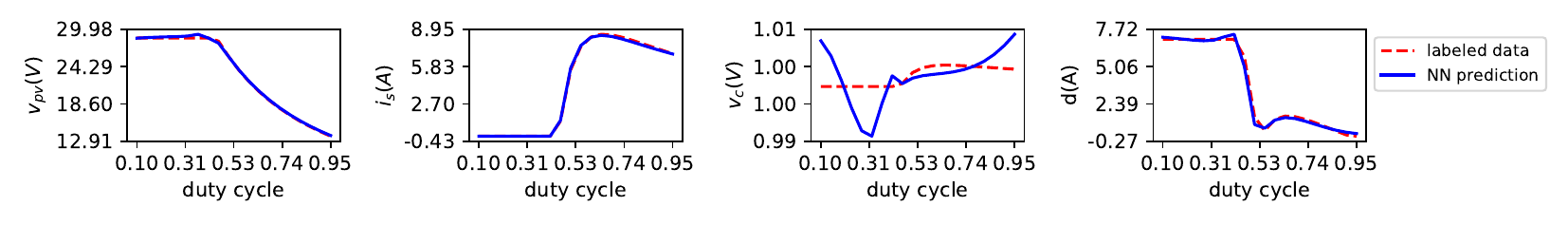}
        \caption{LSTM}
    \end{subfigure}
    \caption{Labeled data (dashed lines) and neural network predictions (solid lines). Along the horizontal axis, as the value of the duty cycle varies from $0.1$ to $0.95$, we keep values of other parameters unchanged, i.e., $v_{pv}=25~V, i_s=5~A, v_c=1~V, G=500~W/m^2, T=55~^{\circ} C, i_L=7~A$. To obtain the labeled data, we integrate the first-principles model for one sampling time $\Delta=60~s$ and take the final values of the state.}
    \label{fig: solar sim vs nn}
\end{figure}

\begin{remark}
A simple but effective method to assess whether the model effectively learns a convex representation from a general nonlinear dynamics is by examining the training and testing MSE. Ideally, a successful input convex model should achieve a moderate MSE (i.e., neither as low as conventional non-convex models nor as high as randomly initialized models). Based on our findings, a model that successfully learns a convex representation typically achieves an MSE in the range of $10^{-3}$ to $10^{-4}$ for normalized data. 
\end{remark}

\subsection{Control Performance}
To validate the control performance, we optimize the aforementioned solar PV energy system using the real-world data of the solar irradiance and temperature of the solar PV panel \citep{JAM72S30} installed at LHT Holdings. In this experiment, the optimization problem was solved using the PyIpopt library. We highlight that the findings in the results are consistent throughout the year, as Singapore's weather remains generally stable due to its equatorial location. For demonstration purposes, we used the data on May 5, 2024 (see Fig. \ref{fig: si real data}), and ran the MPC at two different time windows (i.e., 10 a.m. and 1 p.m.) with randomly generated loads $i_L$. The values of solar irradiance, panel temperature, and the load current remain constant during the sampling period $\Delta=60~s$.

Table \ref{tab: solar runtime} shows the average solving times (over 5 random runs) per step for LSTM and IC-LSTM. In all cases, IC-LSTM enjoys a faster (at least $4 \times$) solving time (for a scaled-up solar PV energy system or a longer prediction horizon, the time discrepancy could be even greater). However, due to the modeling errors (i.e., especially for $i_s$ and $d$ as shown in Fig. \ref{fig: solar sim vs nn}), it is difficult for IC-LSTM to closely track the demand currents. Specifically, we show the tracking results of both models for one run at 10 a.m. in Fig. \ref{fig: solar mpc}. In the top right subplots, the dashed lines are the demand loads, and the solid lines are the real output currents of the system based on the first-principles model and the optimal values of the manipulated input $u$ from solving the MPC problem. While the modeling accuracy of LSTM is superior, the tracking performances of both models are acceptable in practice. From the plot of duty cycles (middle right subplots), we observe that the values of IC-LSTM are consistently larger, i.e., towards the upper bound $u_\text{max}=0.95$, whereas the values of LSTM have a larger variance and a mean at $0.5$. The bias in the calculated duty cycle of IC-LSTM can be explained by referring to Fig. \ref{fig: solar sim vs nn}, as it is favorable for IC-LSTM to operate in the high duty cycle region where the output current $i_s$ monotonically increases and the absolute current deviation $d$ is small (i.e., at bottom of the basin). However, a better performance is often achieved around $u\approx 0.5$, where the output current peaks and is more sensitive to the manipulated input. Note that in Fig. \ref{fig: solar mpc}, both models experience a drop in $i_s$ from the $7^{th}$ minute onward. This corresponds to the drop in the solar irradiance and hence it is infeasible for the solar PV system to attain the load. Note that since the photocurrent of a cell is proportional to irradiance, perfect tracking may be infeasible under low illuminations.

% fig: si real data
\begin{figure}[ht!]
\centering
\includegraphics[width=\textwidth]{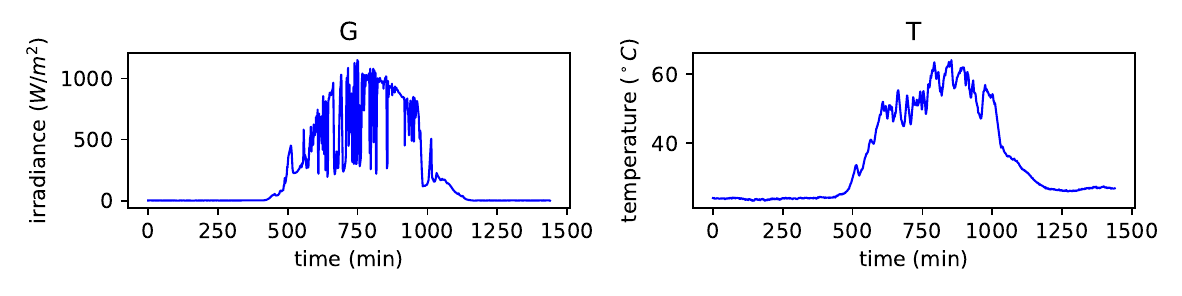}
\caption{Real-world data of the solar irradiance and temperature of the solar PV panel recorded on a minute-by-minute basis, May 5, 2024, LHT Holdings.}
\label{fig: si real data}
\end{figure}

% tab: solar runtime
\begin{table}[ht!]
\centering
\caption{Comparisons of average computational time (seconds)}
\vspace{.5em}
\label{tab: solar runtime}
\begin{tabular}{l|c|c}
\toprule[1pt]
\textbf{Time} & \textbf{IC-LSTM} & \textbf{LSTM}   \\
\midrule
10 a.m.         & $7.74\pm2.19$  & $32.01\pm7.83$ \\
1 p.m.          & $1.36\pm0.50$   & $31.60\pm9.71$  \\
\bottomrule[1pt]
\end{tabular}
\end{table}

\begin{figure}[ht!]
    \centering
    \begin{subfigure}[t]{0.48\textwidth}
        \centering
        \includegraphics[width=\columnwidth]{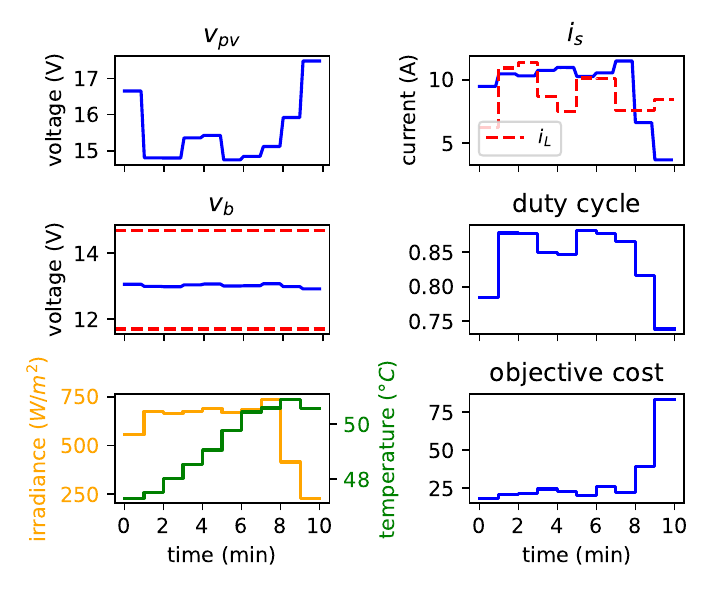}
        \caption{IC-LSTM}
    \end{subfigure}
    ~ 
    \begin{subfigure}[t]{0.48\textwidth}
        \centering
        \includegraphics[width=\columnwidth]{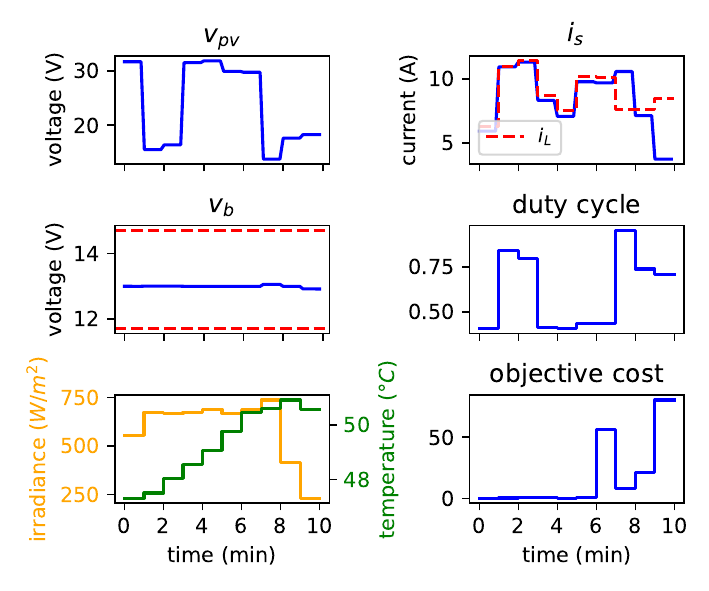}
        \caption{LSTM}
    \end{subfigure}
    \caption{Comparisons of MPC performances using IC-LSTM and LSTM. Real irradiance and temperature data were recorded on May 5, 2024, 10 a.m. at LHT Holdings. Dashed lines represent the demand currents $i_L$ in the top right subplots; the upper and lower bounds of $v_b$ in the middle left subplots.}
    \label{fig: solar mpc}
\end{figure}

Nonetheless, it is important to note that for a practical hybrid energy system (e.g., Fig. \ref{fig_energy_system}), the MPC does not need to produce a perfect tracking of the current. In the optimal situation where $i_s$ equals $i_L$, the loads can be fully satisfied by the solar PV system alone, without the need for input from the battery or the power grid. However, any discrepancy in the current can be managed by the factory drawing power from the main grid and the battery or storing excess energy in the battery. This discrepancy results in additional operational costs for the company, and it may be balanced by the computational speed gains offered by the input-convex model to operate in real-time.

\section{Application to a Chemical Process}
\label{sec4}
\subsection{System Description}
In this case study, we consider a real-time optimization-based control of a well-mixed, nonisothermal continuously stirred tank reactor, where an irreversible second-order exothermic reaction takes place (i.e., the reaction will transform a reactant $A$ to a product $B$). The CSTR is equipped with a heating jacket that supplies/removes heat at a rate of $Q$. The CSTR dynamic model is described by the following material and energy balance equations:
\begin{subequations}
\begin{align}
\dv{C_A}{t} &= \frac{F}{V_L}(C_{A0}-C_A)-k_0e^{\frac{-E}{RT}}C_A^2 \\
\dv{T}{t} &= \frac{F}{V_L}(T_0-T)+\frac{-\Delta H}{\rho_LC_p}k_0e^{\frac{-E}{RT}}C_A^2+\frac{Q}{\rho_LC_pV_L}
\end{align} 
\label{eqcstr}
\end{subequations}
where $C_A$ is the concentration of reactant $A$, $T$ is the temperature, $Q$ is the heat input rate, and $C_{A0}$ is the inlet concentration of reactant $A$. The remaining parameters and their values are shown in Table \ref{tab_cstr_values}.

% tab: cstr details
\begin{table}
\centering
\caption{Parameters and values of CSTR}
\label{tab_cstr_values}
\begin{tabular}{l|c|l}
\toprule[1pt]
\textbf{Parameter}                                  & \textbf{Symbol}    & \textbf{Value}                   \\
\midrule
Volumetric flow rate & $F$ & $5~m^3/hr$ \\
Volume of the reacting liquid & $V_L$ & $1~m^3$ \\
Ideal gas constant & $R$ & $8.314~kJ/kmol~K$ \\
Inlet temperature & $T_0$ & $300~K$ \\
Heat capacity & $C_p$ & $0.231~kJ/kg~K$ \\
Constant density of the reacting liquid & $\rho_L$ & $1000~kg/m^3$ \\
Activation energy & $E$ & $5 \times 10^4~kJ/kmol$ \\
Pre-exponential constant & $k_0$ & $8.46 \times 10^6~m^3/kmol~hr$ \\
Steady-state heat input rate & $Q_s$ & $0.0~kJ/hr$ \\
Steady-state inlet concentration of reactant A & $C_{A0_s}$ & $4~kmol/m^3$\\
Equilibrium concentration of reactant A & $C_{A_s}$ & $1.95~kmol/m^3$\\
Equilibrium temperature & $T_s$ & $402~K$\\
Enthalpy of reaction & $\Delta H$ & $-1.15 \times 10^4~kJ/kmol$ \\
\bottomrule[1pt]
\end{tabular}
\end{table}

The manipulated input vector is $\v u = [\Delta C_{A0}, \Delta Q]^\T$, where $\Delta C_{A0} = C_{A0} - C_{A0_s}$ and $\Delta Q = Q - Q_s$ are the deviations (from the steady state values) of the inlet concentration of reactant $A$ and the heat input rate respectively. The state vector of the CSTR system is $\v x = [C_A - C_{As}, T - T_s]^\T$, and thus the equilibrium point of the system is located at the origin of the state-space. In training, the inputs of the neural network consist of $[T_t - T_s, C_{A,t} - C_{As}, \Delta Q_t, \Delta C_{A0,t}]$ at the current time step $t$, along with their respective negations (as \textbf{expanded inputs}). The outputs of the neural network entail the \textbf{absolute values} of state trajectory $[T_{t+1:n} - T_s, C_{A,t+1:n} - C_{As}]$ over the subsequent $n$ time steps (i.e., representing a sequence problem). Moreover, the main control objective is to operate the CSTR at the unstable equilibrium point $(C_{As}, T_s)$ by manipulating $\Delta C_{A0}$ and $\Delta Q$, using the MPC in Eq. \eqref{eq9} with an additional Lyapunov-Based constraint shown in Eq. \eqref{eq9e} with neural networks, and finally reach the steady state.

\subsection{CSTR Modeling}
We constructed and trained neural networks, which are meticulously configured with a batch size of 256, the Adam optimizer, and the MSE loss function. The dataset is generated from computer simulations following the method in \cite{wu2019machine2} (using forward Euler method). Note that the proposed IC-LSTM modeling method is not limited to simulation data. It can be effectively applied to various data sources, including experimental data and real-world operational data.

The model designs are shown in Table \ref{tabhyper}, including the number of floating point operations (FLOPs). The primary purpose of neural networks is to capture and encapsulate the system dynamics, subsequently integrating into the Lyapunov-based MPC (LMPC) framework shown in Eq. \eqref{eq9} with Eq. \eqref{eq9e}. In the CSTR task, the LMPC aims to drive the system from its initial condition to the steady-state (convergence). Based on our previous works \citep{wu2019machine1, wu2019machine2}, we observed that model performance, with an MSE on the order of $\mathbf{10^{-3}}$ for \textbf{normalized data}, is sufficient to ensure convergence. Therefore, further increasing modeling accuracy may not be necessary if it significantly increases computational costs for neural network training and solving MPC optimization problems. Additionally, the model structures are designed to be minimally complex while achieving the desired performance. As shown in Table \ref{tabhyper}, although ICRNN and IC-LSTM exhibit some accuracy deficiencies (as expected), these models are sufficient to stabilize the system when incorporated into MPC.

\begin{table}[htbp]
\centering
\caption{Hyperparameters of neural network models}
\vspace{.5em}
\resizebox{\linewidth}{!} {
\begin{tabular}{c|c|c|c|c|c|c}
\hline
\textbf{Model} & \textbf{Activation} & \textbf{No. of Layers} & \textbf{No. of Neurons} & \textbf{Test MSE} & \textbf{No. of Parameters} & \textbf{FLOPs}\\
\hline
Plain RNN & Tanh & 2 & 64 & $ 3.53\times 10^{-5} \pm 3.85 \times 10^{-6}$ & 12,802 & 27,924\\
Plain LSTM & Tanh & 2 & 64 & $2.61 \times 10^{-6} \pm 1.90 \times 10^{-7}$ & 50,818 & 104,468\\
% ICFNN & ReLU & 2 & 64 & $ 1.53\times 10^{-1} \pm 3.17 \times 10^{-5}$ & 13,076 & 21,076\\
ICRNN & ReLU & 2 & 64 & $ 5.50\times 10^{-4} \pm 1.04 \times 10^{-4}$ & 38,530 & 78,602\\
IC-LSTM (Ours) & ReLU & 2 & 64 & $4.12 \times 10^{-3} \pm 7.85 \times 10^{-6}$ & 11,298 & 86,858\\
\hline
\end{tabular}}
\label{tabhyper}
\end{table}

%\review{
%\begin{remark}
%Our goal in this work is to accelerate neural network-based optimization. Since the model is trained offline, the costs associated with the training process, such as data collection and model training, as well as achieving extremely high accuracy (we only require a sufficiently accurate model), are not the primary concerns in this work.  
%\end{remark}
%}

\subsection{Lyapunov-Based MPC Formulation}
\label{subsec_lmpc}
In the closed-loop control task, we assume that there exists a stabilizing controller $u = \Phi(x) \in U$ that renders the equilibrium point defined by Eq. \eqref{eq:classofsystems} asymptotically stable. Thus, in addition to the MPC designed in Eq. \eqref{eq9}, we introduce an additional Lyapunov-based constraint to form a LMPC, as follows:
\begin{equation}
\label{eq9e}
    V({\vtilde{x}}(t)) < V(\v{x}(t_k)),\text{ if }\v x(t_k) \in \Omega_{\rho} \backslash \Omega_{\rho_{nn}}, \ \forall t \in [t_k,t_{k+N})
\end{equation}
where $\Omega_{\rho}$ is the closed-loop stability region of the system, and $\Omega_{\rho_{nn}}$ is a small set around the origin where the state should ultimately be driven. The Lyapunov-based constraint $V$ ensures closed-loop stability for the nonlinear system under LMPC by requiring that the value of $V(\v x)$ decreases over time.
Let us define the control Lyapunov function as $V(\v x) = \vt x\v P\v x$, where $\v x \in\R[n_x]$ and $\v P \in \R[n_x \times n_x]$. For $V(\v x)$ to show convexity, it is necessary for its Hessian matrix, denoted as $\v H$, to be positive semidefinite. In this case, $\v H$ is equal to $2\v P$. Therefore, Eq. \eqref{eq9e}, represented as $V(\vtilde{x}(t)) - V(\v x(t)) < 0$, is convex without loss of generality, by selecting a positive semidefinite matrix $\v P$.

Note that the state vector is in the deviation form and the origin of the state-space is the equilibrium point at where the LMPC is targeting. To ensure that the overall optimization problem is convex with respect to the manipulated vector, we let the IC-LSTM to learn the absolute values of the state vector. We define the objective function as $\norm{\vtilde{x}}^2+\norm{\v u}^2$, which has the global minimum at the steady state $(\v 0, \v 0)$. Given that Eq. \eqref{eq9b} is parameterized as IC-LSTM, and both Eq. \eqref{eq9c} and Eq. \eqref{eq9d} take the form of affine functions, the LMPC problem outlined in Eq. \eqref{eq9} with Eq. \eqref{eq9e} qualifies as a convex optimization problem, provided that the $\v P$ matrix is designed to be positive semi-definite.

%We first perform open-loop simulations in the closed-loop stability region $\Omega_\rho$ for the CSTR of Eq. \eqref{eqcstr} to train the neural networks.
The LMPC problem is solved using PyIpopt, which is the Python version of IPOPT \citep{wachter2006implementation}, with an integration time step of $h_c = 1 \times 10^{-4}~hr$ and the sampling period $\Delta = 5 \times 10^{-3}~hr$. The control Lyapunov function $V(\v x) = \vt x\v P\v x$ is designed with the following positive definite $\v P$ matrix as $\bmat{1060 & 22 \\ 22 & 0.52}$, which ensures the convexity of the LMPC. Moreover, the equation for the stability region is defined as $1060x^2 + 44xy + 0.52y^2 - 372 = 0$, which is an ellipse in the state space.

\subsubsection{Control Performance}
The efficacy of LMPC is based on two critical factors: the ability to reach a steady state and the time required for convergence (solving time). To address this, we emphasize the temporal aspect by evaluating the time it takes for the neural network-based LMPC to achieve stability. For the CSTR example, we define the small region of stability to be $|C_A - C_{As}| < 0.02~kmol/m^3$ and $|T - T_s| < 3~K$ and the system is considered practically stable only when both conditions are met simultaneously (i.e., the program will terminate immediately upon system convergence). Moreover, the computational time (i.e., convergence runtime) that drives the system from the initial state to the steady state and the system state after each iteration will be recorded.

Our experimentation focuses on evaluating the control performance of the CSTR of Eq. \eqref{eqcstr} by embedding IC-LSTM into the LMPC of Eq. \eqref{eq9} with the stability constraint of Eq. \eqref{eq9e}. In this experiment, the PyIpopt library was executed on an Intel Core i7-12700 processor with 64 GB of RAM, using 15 different initial conditions within the stability region (i.e., covering the whole stability region). In particular, all trials successfully achieved convergence to the steady state (e.g., Fig. \ref{fig_path} shows the convergence paths of neural network-based LMPC of two initial conditions for demonstration purposes). Specifically, in Fig. \ref{fig_path}, we also perform the simulation under the LMPC using the first-principles model for the CSTR, as the benchmark case for comparison. The first-principles-model-based LMPC demonstrates the smoothest convergence path, with fewer turning points and less fluctuation near the steady state. In contrast, RNN-LMPC and LSTM-LMPC show more turning points and greater fluctuations as they approach the steady state. Among various neural network approaches, IC-LSTM shows the smoothest convergence path, and closely resembles the trajectory of first-principles model-based LMPC. However, it is important to note that similar convergence paths do not necessarily correspond to similar solving times. Observing only the convergence paths does not provide insight into the solving time. Although the neural network-based LMPC exhibits similar convergence paths to the first-principles-model-based LMPC, the solving times are significantly different: the RNN-LMPC takes over 1,500 seconds, the IC-LSTM-LMPC takes over 700 seconds, while the first-principles-based LMPC requires only around 2 seconds. Moreover, Fig. \ref{fig_time} shows the state trajectories of $C_A - C_{As}$ and $T - T_s$ over time in seconds for two initial conditions, demonstrating that the IC-LSTM-based LMPC achieves the fastest convergence. It is important to note that while different neural network-based LMPC methods may have a similar number of steps or iterations to reach a steady state, their computational times can vary significantly. 

Furthermore, Table \ref{tab5} presents the average solving time in 3 random runs for 15 initial conditions and their corresponding percentage decrease with respect to IC-LSTM, demonstrating that the IC-LSTM-based LMPC shortens computational time (i.e., the IC-LSTM-based LMPC achieved the fastest convergence in 13 out of 15 different initial conditions). Specifically, it achieves an average percentage decrease of 54.4\%, 40.0\%, and 41.3\% compared to plain RNN, plain LSTM, and ICRNN, respectively. Overall, the ICRNN performs similarly to the plain LSTM in this optimization task, while the plain RNN performs the worst. In summary, the optimization performance, in terms of convergence path, is comparable between models with MSE on the order of $10^{-3}$ (IC-LSTM) and those with MSE on the order of $10^{-6}$ (LSTM) (see Fig. \ref{fig_path}), while the solving time differs significantly (see Table \ref{tab5}).

\begin{figure}[ht!]
    \centering
    \begin{subfigure}[t]{0.48\textwidth}
        \centering
        \includegraphics[width=\columnwidth]{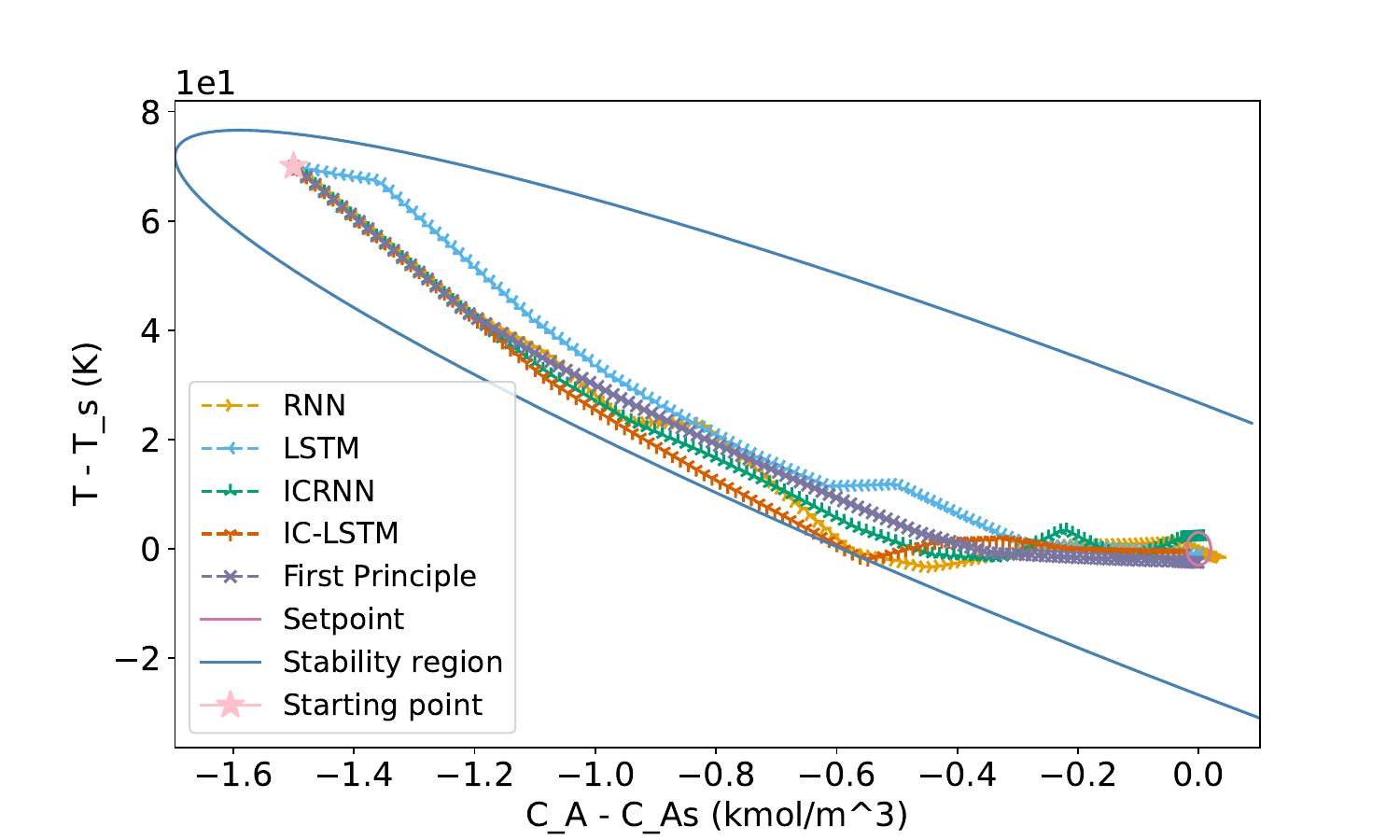}
        \caption{Initial condition of ($-1.5~kmol/m^3, 70~K$)}
        \label{fig_path70}
    \end{subfigure}
    ~ 
    \begin{subfigure}[t]{0.48\textwidth}
        \centering
        \includegraphics[width=\columnwidth]{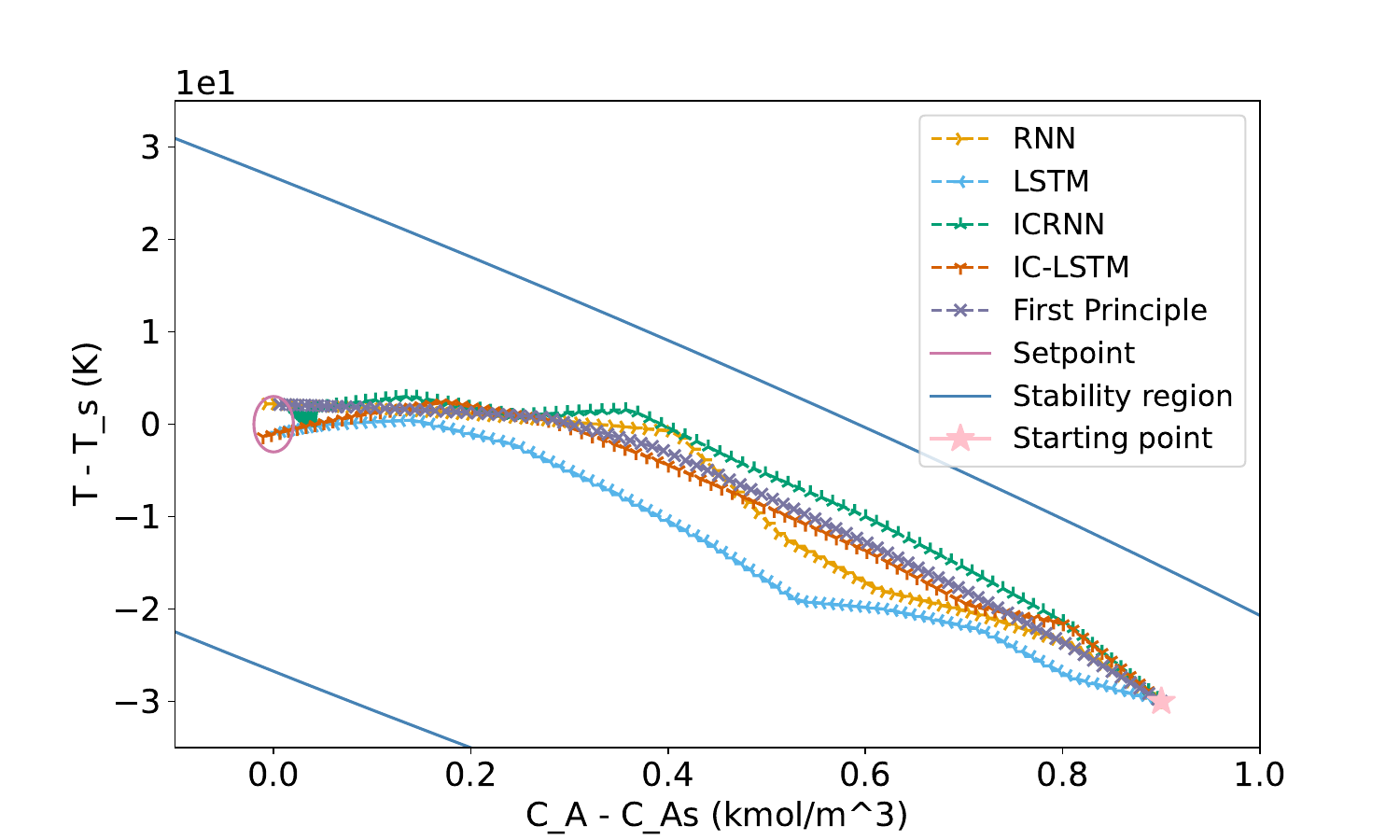}
        \caption{Initial condition of ($0.9~kmol/m^3, -30~K$)}
        \label{fig_path30}
    \end{subfigure}
    \caption{Closed-loop state trajectory ($T - T_s$ vs. $C_A - C_{As}$) under neural network-based MPC.}
    \label{fig_path}
\end{figure}

\begin{figure}[ht!]
    \centering
    \begin{subfigure}[t]{0.48\textwidth}
        \centering
        \includegraphics[width=\columnwidth]{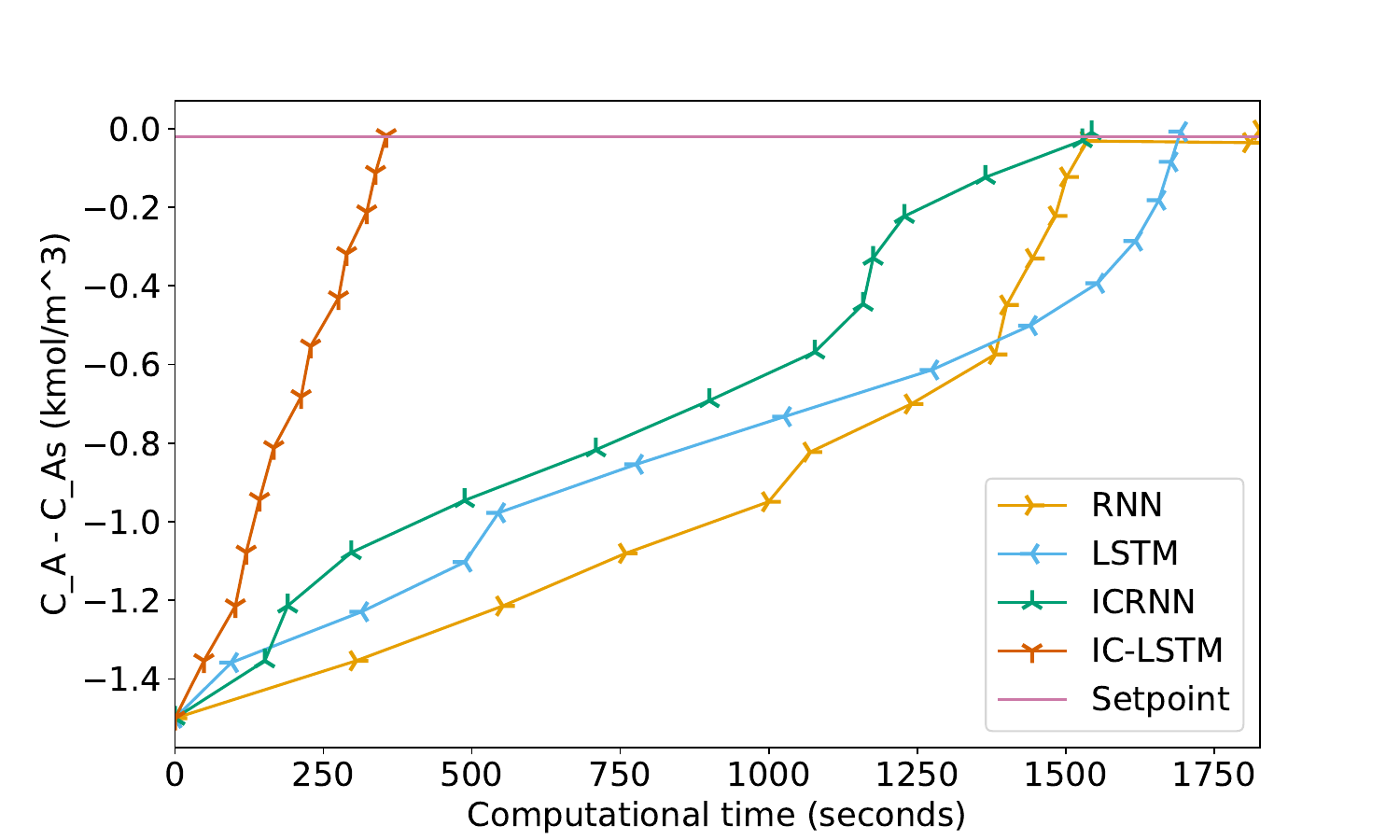}
        \caption{$C_A - C_{As}$ vs. computational time for initial condition of ($-1.5~kmol/m^3, 70~K$)}
        \label{fig_con70}
    \end{subfigure}
    ~ 
    \begin{subfigure}[t]{0.48\textwidth}
        \centering
        \includegraphics[width=\columnwidth]{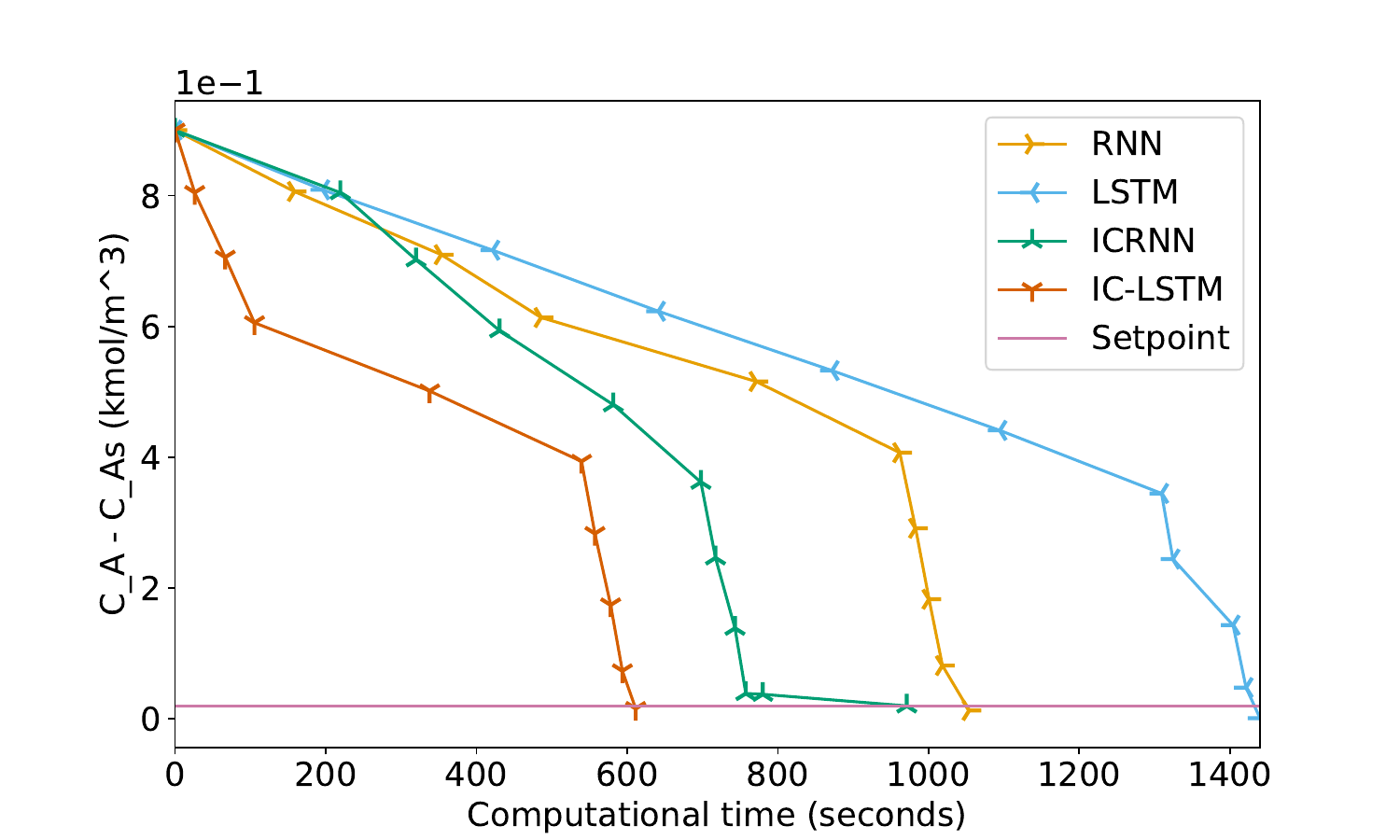}
        \caption{$C_A - C_{As}$ vs. computational time for initial condition of ($0.9~kmol/m^3, -30~K$)}
        \label{fig_con30}
    \end{subfigure}
    ~
    \begin{subfigure}[t]{0.48\textwidth}
        \centering
        \includegraphics[width=\columnwidth]{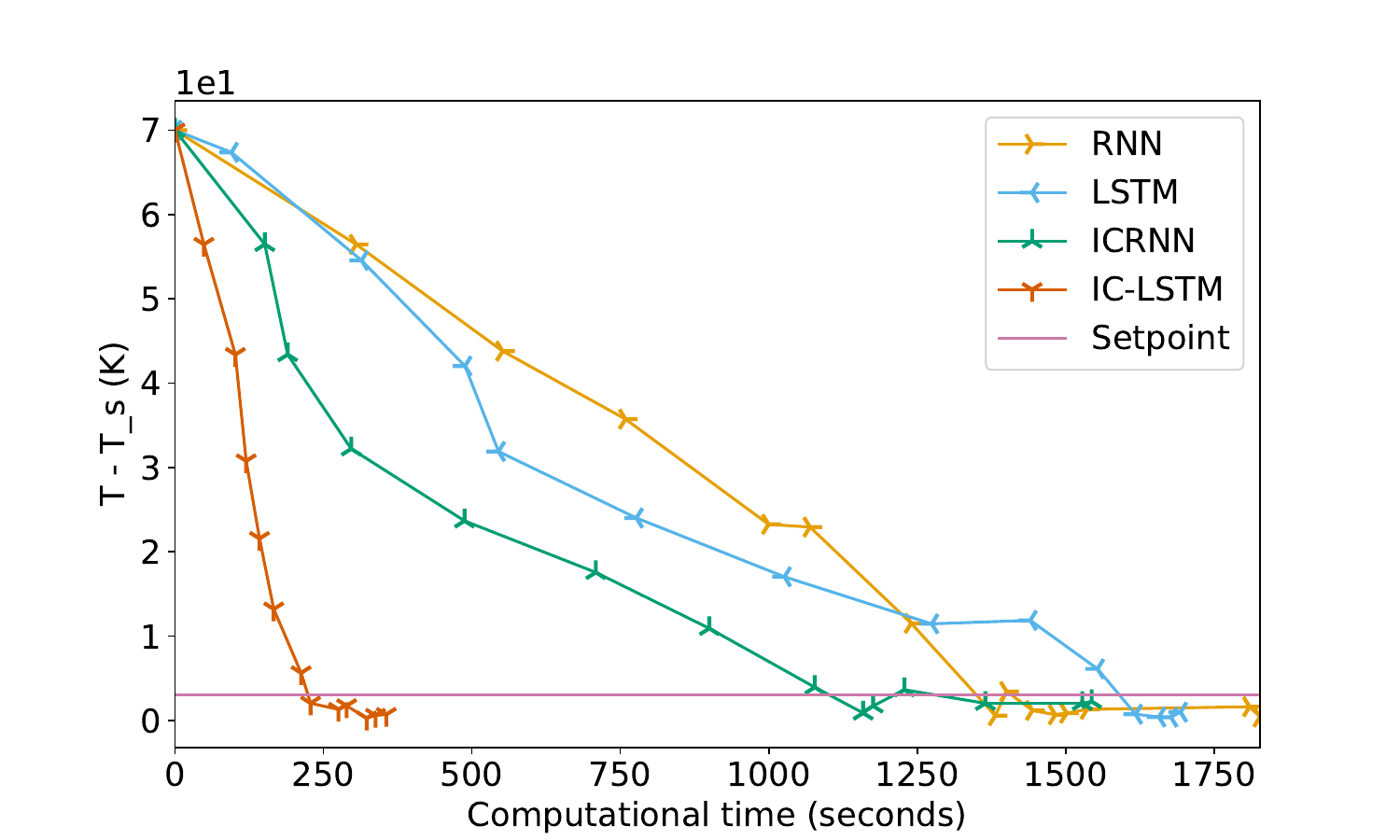}
        \caption{$T - T_s$ vs. computational time for initial condition of ($-1.5~kmol/m^3, 70~K$)}
        \label{fig_temp70}
    \end{subfigure}
    ~ 
    \begin{subfigure}[t]{0.48\textwidth}
        \centering
        \includegraphics[width=\columnwidth]{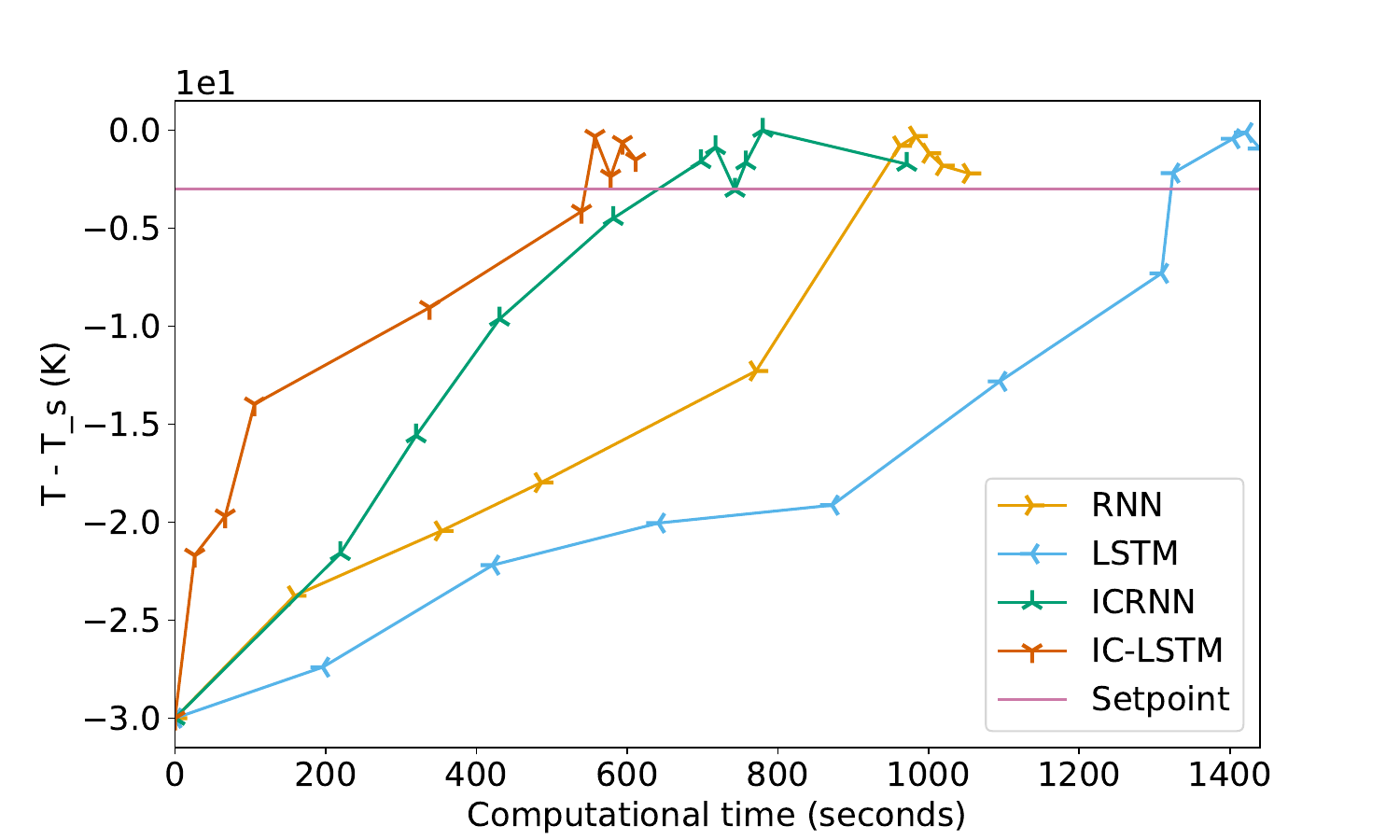}
        \caption{$T - T_s$ vs. computational time for initial condition of ($0.9~kmol/m^3, -30~K$)}
        \label{fig_temp30}
    \end{subfigure}
    \caption{State trajectories of neural network-based LMPC with respect to computational time.}
    \label{fig_time}
\end{figure}

\begin{table}[htbp]
\centering
\vspace{.5em}
\caption{Computational time of neural network-based LMPC and their respective percentage decrease with respect to IC-LSTM-based LMPC}
\resizebox{\linewidth}{!} {
\begin{tabular}{c|c|c|c|c|c|c|c}
\hline
& \multicolumn{2}{|c|}{\textbf{Plain RNN}} & \multicolumn{2}{|c|}{\textbf{Plain LSTM}} & \multicolumn{2}{|c|}{\textbf{ICRNN}} & \textbf{IC-LSTM} (Ours) \\
\cline{2-8}
$\mathbf{[C_{A_i}, T_i]}$& \textbf{Time (s)} & \textbf{\% Decrease} & \textbf{Time (s)} & \textbf{\% Decrease} & \textbf{Time (s)} & \textbf{\% Decrease} & \textbf{Time (s)} \\
\hline
$[-1.5, 70]$ & $1815.98 \pm 8.17$ & $79.62\%$ & $1688.68 \pm 3.40$ & $78.08\%$ & $1550.70 \pm 5.74$ & $76.13\%$ & $\mathbf{370.17} \pm 11.22$ \\
$[-1.3, 60]$ & $1382.14 \pm 9.48$ & $59.21\%$ & $1632.31 \pm 7.05$ & $65.46\%$ & $1387.31 \pm 8.46$ & $59.37\%$ & $\mathbf{563.72} \pm 17.80$ \\
$[-1, 55]$ & $1552.00 \pm 8.38$ & $71.95\%$ & $1391.79 \pm 3.36$ & $68.73\%$ & $1384.69 \pm 9.15$ & $68.57\%$ & $\mathbf{435.26} \pm 4.08$ \\
$[-1.25, 50]$ & $1283.54 \pm 10.83$ & $64.54\%$ & $1453.57 \pm 27.28$ & $68.69\%$ & $1291.00 \pm 13.59$ & $64.75\%$ & $\mathbf{455.10} \pm 3.58$ \\
$[-0.75, 40]$ & $1955.55 \pm 10.02$ & $60.40\%$ & $1079.38 \pm 15.65$ & $28.27\%$ & $1202.08 \pm 11.31$ & $35.59\%$ & $\mathbf{774.26} \pm 4.90$ \\
$[-0.5, 30]$ & $961.90 \pm 6.94$ & $55.58\%$ & $764.51 \pm 14.80$ & $44.11\%$ & $829.13 \pm 23.25$ & $48.47\%$ & $\mathbf{427.26} \pm 2.87$ \\
$[-0.45, 15]$ & $485.19 \pm 1.66$ & $9.33\%$ & $757.02 \pm 13.81$ & $41.89\%$ & $1174.15 \pm 16.13$ & $62.53\%$ & $\mathbf{439.91} \pm 5.89$ \\
$[1.5, -70]$ & $3937.33 \pm 84.67$ & $64.88\%$ & $1556.54 \pm 39.61$ & $11.16\%$ & $1731.51 \pm 12.11$ & $20.13\%$ & $\mathbf{1382.98} \pm 1.48$ \\
$[1.35, -55]$ & $2513.77 \pm 119.03$ & $34.00\%$ & $\mathbf{1472.01} \pm 11.43$ & $-12.71\%$ & $1640.15 \pm 10.92$ & $-1.16\%$ & $1659.11 \pm 35.60$ \\
$[1.1, -45]$ & $1394.29 \pm 4.55$ & $24.60\%$ & $1099.92 \pm 17.66$ & $4.42\%$ & $1236.44 \pm 27.80$ & $14.97\%$ & $\mathbf{1051.35} \pm 36.47$ \\
$[0.9, -30]$ & $1060.22 \pm 4.46$ & $44.62\%$ & $1453.77 \pm 22.88$ & $59.61\%$ & $977.75 \pm 14.97$ & $39.95\%$ & $\mathbf{587.12} \pm 17.25$ \\
$[0.75, -40]$ & $1957.87 \pm 8.30$ & $52.12\%$ & $1030.41 \pm 9.11$ & $9.03\%$ & $1460.97 \pm 54.15$ & $35.84\%$ & $\mathbf{937.35} \pm 28.40$ \\
$[0.6, -25]$ & $1646.76 \pm 52.49$ & $54.36\%$ & $927.37 \pm 33.98$ & $18.96\%$ & $1018.70 \pm 13.36$ & $26.22\%$ & $\mathbf{751.56} \pm 8.47$ \\
$[0.4, -35]$ & $1203.42 \pm 41.60$ & $61.24\%$ & $765.72 \pm 20.85$ & $39.08\%$ & $533.06 \pm 21.60$ & $12.49\%$ & $\mathbf{466.47} \pm 8.45$ \\
$[0.2, -15]$ & $\mathbf{263.55} \pm 4.74$ & $-46.64\%$ & $725.71 \pm 19.64$ & $46.74\%$ & $797.74 \pm 4.74$ & $51.55\%$ & $386.48 \pm 5.55$ \\
\hline
\textbf{Average} & $1560.9$ & $54.4\%$ & $1186.6$ & $40.0\%$ & $1214.4$ & $41.3\%$ & $\mathbf{712.5}$ \\
\hline
\end{tabular}
}
\label{tab5}
\end{table}

\begin{remark}
\label{remark7}
It is important to note that input convex models may not exhibit the same level of performance as conventional non-convex machine learning models. This discrepancy arises from the smoothing effect on non-convex features in the data, a process known as convexification. Despite this limitation, an input convex structure proves advantageous in optimization problems. In practice, users are encouraged to carefully assess the advantages and disadvantages of employing an input convex structure, taking into account their specific goals and requirements. In summary, employing IC-LSTM involves a trade-off between computational efficiency in solving neural network-based optimization and modeling accuracy. It is advisable to use IC-LSTM in optimization contexts where computational speed is crucial (e.g., in real-time optimization and control problems).
\end{remark}

\begin{remark}
In addition to the black-box modeling approach (i.e., neural network modeling discussed in this work), there are also gray-box modeling methods, such as those described in \cite{bunning2022physics} and parameter fitting strategies such as traditional system identification and physics-informed neural networks that identify or refine parameter values within first-principles models. White-box methods (i.e., first-principles models) are suitable when first-principles models and their corresponding parameter values are available for systems with well-understood physicochemical phenomena. Gray-box methods are applicable when the functional forms of first-principles models are available with unknown parameter values. Black-box methods are used for modeling complex nonlinear systems with limited knowledge of their first-principles models. In this work, we focus on addressing the challenges associated with black-box methods, particularly improving online computational efficiency in neural network-based optimization and control.
\end{remark}

%%%%%%%%%%%%%%%%%%%%
\section{Conclusion}
\label{sec5}
In this study, we developed a novel neural network architecture (i.e., IC-LSTM) that ensures convexity of the output with respect to the input, specifically tailored for convex neural network-based optimization and control. In particular, our framework excels in terms of computational efficiency in solving neural network-based optimization for real-time operations. Through the real-time optimization of a real-world hybrid energy system at LHT Holdings and the simulation study of a dynamic CSTR system, we demonstrated the efficacy and efficiency of our proposed framework. This work serves as a pivotal bridge between ICNNs and their applications within the realm of a variety of engineering systems, such as energy and chemical applications.

\section{Acknowledgments}
Financial support from the A*STAR MTC YIRG 2022 Grant (222K3024) and  NRF-CRP Grant 27-2021-0001 is gratefully acknowledged.

\newpage
\bibliography{reference}

\end{document}